\newcommand{\eod}{{${}$\\}}
\par\vspace{4mm}}
\newcommand*\bell{\ensuremath{\boldsymbol\ell}}
\newcommand*\bepsilon{\ensuremath{\boldsymbol\epsilon}}
\newcommand*\bpi{\ensuremath{\boldsymbol\pi}}
\newcommand*\bphi{\ensuremath{\boldsymbol\phi}}
\newcommand*\bpsi{\ensuremath{\boldsymbol\psi}}
\newcommand*\bmu{\ensuremath{\boldsymbol\mu}}
\newcommand*\bZe{\ensuremath{\boldsymbol 0}}
\DeclareMathOperator*{\argmin}{argmin}
\DeclareMathOperator*{\argmax}{argmax}
\DeclareMathOperator*{\grad}{\nabla}
\DeclareMathOperator*{\expec}{\mathbb E}
\newcommand{\error}{{\mathcal{E}}}
\newcommand{\x}{{\mathbf x}}
\newcommand{\subg}{{\mathbf g}}
\newcommand{\y}{{\mathbf y}}
\newcommand{\z}{{\mathbf z}}
\newcommand{\vt}{\mathbf{v}}
\newcommand{\bu}{\mathbf{u}}
\newcommand{\wt}{\mathbf{w}}
\newcommand{\Wt}{\mathbf{W}}
\newcommand{\Vt}{\mathbf{V}}
\newcommand{\Qt}{\mathbf{G}}
\newcommand{\bI}{\mathbf{I}}
\newcommand{\tdg}{\xi}
\newcommand{\sta}{\mathbf{s}}
\newcommand{\act}{\mathbf{a}}
\newcommand{\bP}{{\mathbb P}}
\newcommand{\bR}{{\mathbb R}}
\newcommand{\bbO}{{\mathds 1}}
\newcommand{\cA}{{\mathcal A}}
\newcommand{\cS}{{\mathcal S}}
\newcommand{\ccH}{{\mathcal H}}
\newcommand{\qprop}{{\texttt{GProp}}}
\newcommand{\reinforce}{{\texttt{REINFORCE}}}
\newcommand{\copdac}{{\texttt{COPDAC-}\mathtt{Q}}}
\newcommand{\dd}{{\partial}}
\newtheorem{thm}{Theorem}
\newtheorem{lem}[thm]{Lemma}
\newtheorem{defn}{Definition}
\newtheorem{eg}{Example}
\newtheorem{rem}{Remark}
\begin{document} 
\title{Compatible Value Gradients for Reinforcement Learning\\ of Continuous Deep Policies}
\author{\name David Balduzzi \email david.balduzzi@vuw.ac.nz \\
       \addr School of Mathematics and Statistics\\
       Victoria University of Wellington\\
       Wellington, New Zealand
       \AND
       \name Muhammad Ghifary \email muhammad.ghifary@ecs.vuw.ac.nz \\
       \addr School of Engineering and Computer Science\\
       Victoria University of Wellington\\
       Wellington, New Zealand
}

\maketitle

\begin{abstract} 
	This paper proposes $\qprop$, a deep reinforcement learning algorithm for continuous policies with compatible function approximation. The algorithm is based on two innovations. Firstly, we present a temporal-difference based method for learning the \emph{gradient} of the value-function. Secondly, we present the deviator-actor-critic (DAC) model, which comprises three neural networks that estimate the value function, its gradient, and determine the actor's policy respectively.

	We evaluate $\qprop$ on two challenging tasks: a contextual bandit problem constructed from nonparametric regression datasets that is designed to probe the ability of reinforcement learning algorithms to accurately estimate gradients; and the octopus arm, a challenging reinforcement learning benchmark. $\qprop$ is competitive with \emph{fully supervised methods} on the bandit task and achieves the best performance to date on the octopus arm.
\end{abstract}

\begin{keywords}
  policy gradient, reinforcement learning, deep learning, gradient estimation, temporal difference learning
\end{keywords}

\section{Introduction}
\label{sec:intro}

In reinforcement learning, an agent learns to maximize its discounted future rewards \citep{sutton:98}. The structure of the environment is initially unknown, so the agent must both learn the rewards associated with various action-sequence pairs and optimize its policy. A natural approach is to tackle the subproblems separately via a critic and an actor \citep{barto:83,konda:00}, where the critic estimates the value of different actions and the actor maximizes rewards by following the policy gradient \citep{sutton:99,peters:06,silver:14}. Policy gradient methods have proven useful in settings with high-dimensional continuous action spaces, especially when task-relevant \emph{policy representations} are at hand \citep{deisenroth:11,levine:15,wahlstrom:15}.

We tackle the problem of learning actor (policy) and critic representations. In the supervised setting, representation or deep learning algorithms have recently demonstrated remarkable performance on a range of benchmark problems. However, the problem of learning features for reinforcement learning remains comparatively underdeveloped. The most dramatic recent success uses $Q$-learning over finite action spaces, and essentially build a neural network critic \citep{Mnih:2015wq}. Here, we consider \emph{continuous} action spaces, and develop an algorithm that simultaneously learns the value function and its gradient, which it then uses to find the optimal policy.

\subsection{Outline}
This paper presents Value-Gradient Backpropagation ($\qprop$), a deep actor-critic algorithm for continuous action spaces with compatible function approximation. Our starting point is the deterministic policy gradient and associated compatibility conditions derived in \citep{silver:14}. Roughly speaking, the compatibility conditions are that
\begin{enumerate}[C1.]
	\item the critic approximate the gradient of the value-function and
	\item the approximation is closely related to the gradient of the policy.
\end{enumerate}
See Theorem~\ref{t:compat} for details. We identify and solve two problems with prior work on policy gradients -- relating to the two compatibility conditions:
\begin{enumerate}[P1.]
 	\item \emph{Temporal difference methods do not directly estimate the gradient of the value function.}\\
 	Instead, temporal difference methods are applied to learn an approximation of the form $Q^\vt(\sta) + Q^\wt(\sta,\act)$, where $Q^\vt(\sta)$ estimates the value of a state, given the current policy, and $Q^\wt(\sta,\act)$ estimates the \emph{advantage} from deviating from the current policy \citep{sutton:99,peters:06,deisenroth:11,silver:14}. Although the advantage is related to the gradient of the value function, it is not the same thing.
 	\item \emph{The representations used for compatible approximation scale badly on neural networks.}\\
 	The second problem is that prior work has restricted to advantage functions constructed from a particular state-action representation, $\bphi(\sta,\act) = \grad_\theta\bmu_\theta(\sta)(\act-\bmu_\theta(\sta))$, that depends on the gradient of the policy. The representation is easy to handle for linear policies. However, if the policy is a neural network, then the standard state-action representation ties the critic too closely to the actor and depends on the internal structure of the actor, Example~\ref{eg:deep_advantage}. As a result, weight updates cannot be performed by backpropagation, see section~\ref{sec:problem}. 
 \end{enumerate} 

\noindent
The paper makes three novel contributions. The first two contributions relate directly to problems P1 and P2. The third is a new task designed to test the accuracy of gradient estimates.

\paragraph{Method to directly learn the gradient of the value function.}
The first contribution is to modify temporal difference learning so that it directly estimates the gradient of the value-function. The \emph{gradient perturbation trick}, Lemma~\ref{lem:gradient}, provides a way to simultaneously estimate both the value of a function at a point and its gradient, by perturbing the function's input with uncorrelated Gaussian noise. 

Plugging in a neural network instead of a linear estimator extends the trick to the problem of learning a function and its gradient over the entire state-action space. Moreover, the trick combines naturally with temporal difference methods, Theorem~\ref{thm:extension}, and is therefore well-suited to applications in reinforcement learning.

\paragraph{Deviator-Actor-Critic (DAC) model with compatible function approximation.}
The second contribution is to propose the Deviator-Actor-Critic (DAC) model, Definition~\ref{def:beh_crit}, consisting in three coupled neural networks and Value-Gradient Backpropagation ($\qprop$), Algorithm~\ref{alg:qprop}, which backpropagates three different signals to train the three networks. The main result, Theorem~\ref{thm:main}, is that $\qprop$ has compatible function approximation when implemented on the DAC model when the neural network consists in linear and rectilinear units.\footnote{The proof also holds for maxpooling, weight-tying and other features of convnets. A description of how closely related results extend to convnets is provided in \citep{doco:15}.} 

The proof relies on decomposing the Actor-network into individual units that are considered as actors in their own right, based on ideas in \citep{srivastava:14,doco:15}. It also suggests interesting connections to work on structural credit assignment in multiagent reinforcement learning \citep{agogino:04,agogino:08,holmesparker:14}.

\paragraph{Contextual bandit task to probe the accuracy of gradient estimates.}
A third contribution, that may be of independent interest, is a new contextual bandit setting designed to probe the ability of reinforcement learning algorithms to estimate gradients. A supervised-to-contextual bandit transform was proposed in \citep{dudik:14} as a method for turning classification datasets into $K$-armed contextual bandit datasets.

We are interested in the \emph{continuous} setting in this paper. We therefore adapt their transform with a twist. The SARCOS and Barrett datasets from robotics have features corresponding to the positions, velocities and accelerations of seven joints and labels corresponding to their torques. There are 7 joints in both cases, so the feature and label spaces are 21 and 7 dimensional respectively. The datasets are traditionally used as regression benchmarks labeled SARCOS1 through SARCOS7 where the task is to predict the torque of a single joint -- and similarly for Barrett.

We convert the two datasets into two continuous contextual bandit tasks where the reward signal is the negative distance to the correct label 7-dimensional. The algorithm is thus ``told'' that the label lies on a sphere in a 7-dimensional space. The missing information required to pin down the label's position is precisely the gradient. For an algorithm to make predictions that are competitive with fully supervised methods, it is necessary to find extremely accurate gradient estimates.

\paragraph{Experiments.}
Section~\ref{sec:experiments} evaluates the performance of $\qprop$ on the contextual bandit problems described above and on the challenging octopus arm task \citep{engel:05}. We show that $\qprop$ is able to simultaneously solve seven nonparametric regression problems without observing any labels -- instead using the distance between its actions and the correct labels. It turns out that $\qprop$ is competitive with recent \emph{fully supervised} learning algorithms on the task. Finally, we evaluate $\qprop$ on the octopus arm benchmark, where it achieves the best performance reported to date.

\subsection{Related work}

An early reinforcement learning algorithm for neural networks is $\reinforce$ \citep{williams:92}. A disadvantage of $\reinforce$ is that the entire network is trained with a single scalar signal. 

Our proposal builds on ideas introduced with deep $Q$-learning \citep{Mnih:2015wq}, such as replay. However, deep $Q$-learning is restricted to finite action spaces, whereas we are concerned with \emph{continuous} action spaces. 

Policy gradients were introduced in \citep{sutton:99} and have been used extensively \citep{kakade:01,peters:06,deisenroth:11}. The deterministic policy gradient was introduced in \citep{silver:14}, which also proposed the algorithm $\copdac$. The relationship between $\qprop$ and $\copdac$ is discussed in detail in section~\ref{sec:problem}.

An alternate approach, based on the idea of backpropagating the gradient of the value function, is developed in \citep{jordan:90,prokhorov:97,wang:01,hafner:11,fairbank:12,fairbank:13}. Unfortunately, these algorithms do not have compatible function approximation in general, so there are no guarantees on actor-critic interactions. See section~\ref{sec:problem} for further discussion.

The analysis used to prove compatible function approximation relies on decomposing the Actor neural network into a collection of agents corresponding to the units in the network. The relation between $\qprop$ and the difference-based objective proposed for multiagent learning \citep{agogino:08,holmesparker:14} is discussed in section~\ref{sec:local_actors}.

\subsection{Notation}

We use boldface to denote vectors, subscripts for time, and superscripts for individual units in a network. Sets of parameters are capitalized ($\Theta$, $\Wt$, $\Vt$) when they refer to matrices or to the parameters of neural networks.

\section{Deterministic Policy Gradients}
\label{sec:ac}

This section recalls previous work on policy gradients. The basic idea is to simultaneously train an actor and a critic. The critic learns an estimate of the value of different policies; the actor then follows the gradient of the value-function to find an optimal (or locally optimal) policy in terms of expected rewards.

\subsection{The Policy Gradient Theorem}

The environment is modeled as a Markov Decision Process consisting of state space $\cS\subset\bR^m$, action space $\cA\subset\bR^d$, initial distribution $p_1(\sta)$ on states, stationary transition distribution $p(\sta_{t+1}|\sta_t,\act_t)$ and reward function $r:\cS\times \cA\rightarrow \bR$. A \emph{policy} is a function $\bmu_\theta:\cS\rightarrow \cA$ from states to actions. We will often add noise to policies, causing them to be stochastic. In this case, the policy is a function $\bmu_\theta:\cS\rightarrow \triangle_\cA$, where $\triangle_\cA$ is the set of probability distributions on actions.

Let $p_t(\sta\rightarrow \sta',\bmu)$ denote the distribution on states $\sta'$ at time $t$ given policy $\bmu$ and initial state $\sta$ at $t=0$ and let $\rho^{\bmu}(\sta') = \int_\cS\sum_{t=0}^\infty\gamma^tp_1(\sta)p_t(\sta\rightarrow \sta',\bmu)d\sta$. Let $r_t^\gamma = \sum_{\tau=t}^\infty\gamma^{\tau-t} r(\sta_\tau,\act_\tau)$ be the discounted future reward. Define the
\begin{align}
	 \text{value of a state-action pair: } \qquad
	 & Q^{\bmu_\theta}(\sta,\act) = \expec[r_1^\gamma | {\mathbf S}_1=\sta,{\mathbf A}_1=\act;\bmu_\theta]\quad\text{and}
	\label{e:Q}
	\\
	\text{value of a policy: } \qquad
	&  J({\bmu_\theta}) = \expec_{\sta\sim \rho^{\bmu}, \act\sim \bmu_\theta}[Q^{\bmu_\theta}(\sta,\act)].
	\label{eq:J}
\end{align}
The aim is to find the policy $\theta^* := \argmax_\theta J({\bmu_\theta})$ with maximal value.  A natural approach is to follow the gradient \citep{sutton:99}, which in the deterministic case can be computed explicitly as
\begin{thm}[policy gradient]\label{t:dpg}\eod	
	Under reasonable assumptions on the regularity of the Markov Decision Process the policy gradient can be computed as
		\begin{align}
			\grad_\theta J(\bmu_\theta)
			& = \expec_{\sta\sim \rho^{\bmu}} \left[\grad_\theta\bmu_\theta(\sta) \grad_{\act} Q^{\bmu}(\sta,\act)_{|\act=\bmu_\theta(\sta)}\right].
		\end{align}	
\end{thm}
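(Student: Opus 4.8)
\medskip
\noindent
The plan is to differentiate the Bellman recursion for the state value and unroll it in time, so that the discounted visitation measure $\rho^{\bmu}$ appears on its own. Write $V^{\bmu_\theta}(\sta):=Q^{\bmu_\theta}(\sta,\bmu_\theta(\sta))$ for the value of a state under the deterministic policy, so that the objective can be written $J(\bmu_\theta)=\int_\cS p_1(\sta)\,V^{\bmu_\theta}(\sta)\,d\sta$; from \eqref{e:Q} and the Markov property, $Q$ obeys the Bellman equation $Q^{\bmu_\theta}(\sta,\act)=r(\sta,\act)+\gamma\int_\cS p(\sta'\,|\,\sta,\act)\,V^{\bmu_\theta}(\sta')\,d\sta'$.

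First I would differentiate $V^{\bmu_\theta}(\sta)=Q^{\bmu_\theta}(\sta,\bmu_\theta(\sta))$ in $\theta$. The parameter enters in two ways: explicitly, through the action $\bmu_\theta(\sta)$ fed into $Q^{\bmu_\theta}(\sta,\cdot)$, and implicitly, through the $\theta$-dependence of $V^{\bmu_\theta}$ at the successor state inside the integral. Collecting the two explicit occurrences of $\bmu_\theta(\sta)$ (the one in $r$ and the one in the transition kernel) into $\grad_\act Q^{\bmu_\theta}$, the chain rule yields
\begin{equation}
  \grad_\theta V^{\bmu_\theta}(\sta)
  =\grad_\theta\bmu_\theta(\sta)\,\grad_\act Q^{\bmu_\theta}(\sta,\act)_{|\act=\bmu_\theta(\sta)}
  +\gamma\int_\cS p(\sta'\,|\,\sta,\bmu_\theta(\sta))\,\grad_\theta V^{\bmu_\theta}(\sta')\,d\sta'.
\end{equation}

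Next I would iterate this identity: substituting it into its own right-hand side and composing the one-step policy-induced kernels into the $t$-step kernels $p_t(\sta\to\sta',\bmu)$, the recursion telescopes, with the order-$n$ remainder vanishing because $\gamma<1$ and $\grad_\theta V^{\bmu_\theta}$ is bounded, to
\begin{equation}
  \grad_\theta V^{\bmu_\theta}(\sta)=\sum_{t=0}^{\infty}\gamma^t\int_\cS p_t(\sta\to\sta',\bmu)\,\grad_\theta\bmu_\theta(\sta')\,\grad_\act Q^{\bmu_\theta}(\sta',\act)_{|\act=\bmu_\theta(\sta')}\,d\sta'.
\end{equation}
Integrating against $p_1$, interchanging the absolutely convergent sum with the integral, and recognising $\rho^{\bmu}(\sta')=\int_\cS\sum_{t\ge0}\gamma^t p_1(\sta)\,p_t(\sta\to\sta',\bmu)\,d\sta$ then gives $\grad_\theta J(\bmu_\theta)=\expec_{\sta\sim\rho^{\bmu}}\!\left[\grad_\theta\bmu_\theta(\sta)\,\grad_\act Q^{\bmu}(\sta,\act)_{|\act=\bmu_\theta(\sta)}\right]$, which is the claim.

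The step I expect to be the real obstacle is analytic rather than algebraic: the derivation silently assumes the relevant derivatives exist and that $\grad_\theta$ commutes with the integral over $\cS$ and with the infinite sum over $t$. This is exactly what the ``reasonable regularity assumptions'' must provide --- continuity and boundedness of $\bmu_\theta$, $r$, $p(\sta'\,|\,\sta,\act)$ and their first derivatives on $\cS\times\cA$ (so that $Q^{\bmu_\theta}(\sta,\cdot)$ is differentiable with a bounded $\act$-gradient, $\grad_\theta V^{\bmu_\theta}$ is bounded, and dominated convergence licenses differentiation under the integral and the sum--integral swap), together with enough control on the state space (e.g.\ compactness of $\cS$) to invoke Fubini. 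I would state these hypotheses explicitly at the outset, following Silver et al.; granted them, the differentiation and collection of terms above are routine. An alternative route I would mention but not pursue is to obtain this formula as the zero-variance limit of the stochastic policy gradient theorem; the direct Bellman argument is, however, more self-contained.
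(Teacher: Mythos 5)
Your argument is correct and coincides with the intended proof: the paper does not prove Theorem~\ref{t:dpg} itself but defers to \citep{silver:14}, and the derivation there is exactly yours --- differentiate the Bellman recursion for $V^{\bmu_\theta}(\sta)=Q^{\bmu_\theta}(\sta,\bmu_\theta(\sta))$, collect the explicit action-dependence into $\grad_\act Q^{\bmu_\theta}$, unroll the recursion so the discounted visitation measure $\rho^{\bmu}$ emerges, and invoke regularity conditions (continuity and boundedness of $r$, $p$, $\bmu_\theta$ and their derivatives, compactness) to justify differentiating under the integral and swapping the sum and integral. Your explicit flagging of those interchange steps is precisely what the ``reasonable assumptions'' in the statement refer to, so nothing further is needed.
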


\begin{proof}
	See \citep{silver:14}.
\end{proof}

\subsection{Linear Compatible Function Approximation}

Since the agent does not have direct access to the value function $Q^{\bmu}$, it must instead learn an estimate $Q^\wt\approx Q^{\bmu}$. A sufficient condition for when plugging an estimate $Q^\wt(\sta,\act)$ into the policy gradient $\grad_\theta J(\theta) = \expec[\grad_\theta \bmu_\theta(\sta)\grad_{\act} Q^{\bmu_\theta}(\sta,\act)_{|\act=\bmu_\theta(\sta)}]$ yields an unbiased estimator was first proposed in \citep{sutton:99}. 

A sufficient condition in the deterministic setting is:

\begin{thm}[compatible value function approximation]\label{t:compat}\eod
 	The value-estimate $Q^{\wt}(\sta,\act)$ satisfies is compatible with the policy gradient, that is 
 	\begin{equation}
 		\grad_\theta J(\theta) = \expec\left[\grad_\theta \bmu_\theta(\sta)\cdot \grad_{\act} Q^{\wt}(\sta,\act)_{|\act=\bmu_\theta(\sta)}\right]
 	\end{equation}
 	if the following conditions hold:
		\begin{enumerate}[C1.]
			\item \textbf{$Q^\wt$ approximates the value gradient:}\\
			The weights learned by the approximate value function must satisfy $\wt = \argmin_{\wt'}\bell_{GE}(\theta,\wt')$, where 
			\begin{equation}
			 	\bell_{GE}(\theta,\wt)
				:= \expec\left[\left\|\grad_{\act} Q^{\wt}(\sta,\act)_{|\act=\bmu_\theta(\sta)} - \grad_{\act} Q^{\bmu}(\sta,\act)_{|\act=\bmu_\theta(\sta)}\right\|^2\right]
				\label{eq:val_estimate}
			\end{equation}
			is the mean-square difference between the gradient of the true value function $Q^{\bmu}$ and the approximation $Q^{\wt}$.
			\item \textbf{$Q^\wt$ is policy-compatible:}\\
			The gradients of the value-function and the policy must satisfy
			\begin{equation}
				\label{eq:grad_compat}
				\grad_{\act} Q^{\wt}(\sta,\act)_{|\act=\bmu_\theta(\sta)} = \big\langle \grad_\theta\bmu_\theta(\sta), \wt\big\rangle.
			\end{equation}			
		\end{enumerate}
\end{thm}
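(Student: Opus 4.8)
The plan is to show that under conditions C1 and C2 the error incurred by substituting $Q^{\wt}$ for $Q^{\bmu}$ in the policy gradient vanishes. Concretely, the difference between the true policy gradient (Theorem~\ref{t:dpg}) and the plugged-in version is
\[
\grad_\theta J(\theta) - \expec_{\sta\sim\rho^{\bmu}}\!\left[\grad_\theta\bmu_\theta(\sta)\cdot\grad_{\act}Q^{\wt}(\sta,\act)_{|\act=\bmu_\theta(\sta)}\right]
= \expec_{\sta\sim\rho^{\bmu}}\!\left[\grad_\theta\bmu_\theta(\sta)\cdot\bepsilon(\sta)\right],
\]
where $\bepsilon(\sta) := \grad_{\act}Q^{\bmu}(\sta,\act)_{|\act=\bmu_\theta(\sta)} - \grad_{\act}Q^{\wt}(\sta,\act)_{|\act=\bmu_\theta(\sta)}$ is the pointwise gradient-approximation error. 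So it suffices to prove that this expectation is zero.

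First I would use C1. Since $\wt$ minimizes the quadratic objective $\bell_{GE}(\theta,\wt')$ over $\wt'$, the stationarity condition $\grad_{\wt'}\bell_{GE}(\theta,\wt')_{|\wt'=\wt} = \bZe$ must hold. Differentiating under the expectation in~\eqref{eq:val_estimate} gives
\[
\expec_{\sta\sim\rho^{\bmu}}\!\left[\grad_{\wt}\grad_{\act}Q^{\wt}(\sta,\act)_{|\act=\bmu_\theta(\sta)}\cdot\bepsilon(\sta)\right] = \bZe .
\]
Next I would invoke C2: equation~\eqref{eq:grad_compat} says $\grad_{\act}Q^{\wt}(\sta,\act)_{|\act=\bmu_\theta(\sta)} = \langle\grad_\theta\bmu_\theta(\sta),\wt\rangle$, so differentiating the right-hand side in $\wt$ yields $\grad_{\wt}\grad_{\act}Q^{\wt}(\sta,\act)_{|\act=\bmu_\theta(\sta)} = \grad_\theta\bmu_\theta(\sta)$ (the Jacobian of the policy, independent of $\wt$ because the expression is linear in $\wt$). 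Substituting this into the stationarity identity collapses it exactly to $\expec_{\sta\sim\rho^{\bmu}}[\grad_\theta\bmu_\theta(\sta)\cdot\bepsilon(\sta)] = \bZe$, which is what we needed; combined with the displayed difference above this gives $\grad_\theta J(\theta) = \expec[\grad_\theta\bmu_\theta(\sta)\cdot\grad_{\act}Q^{\wt}(\sta,\act)_{|\act=\bmu_\theta(\sta)}]$.

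The step I expect to be the main obstacle is the bookkeeping around the chain rule in the stationarity condition: one must be careful that the derivative $\grad_{\wt}$ of $Q^{\wt}$ evaluated at $\act=\bmu_\theta(\sta)$ is taken holding the evaluation point fixed (so no spurious $\grad_\theta\bmu_\theta$ term sneaks in from the evaluation point), and that the interchange of $\grad_{\wt'}$ with the expectation over $\rho^{\bmu}$ is justified by the "reasonable regularity assumptions" already invoked in Theorem~\ref{t:dpg}. The linearity of $Q^{\wt}$ in $\wt$ afforded by C2 is what makes the second derivative $\grad_{\wt}\grad_{\act}Q^{\wt}$ clean and constant in $\wt$, so verifying that C2 really delivers this linearity (rather than merely an identity at a single $\wt$) is the conceptual crux; I would state C2 as an identity of functions of $\wt$, matching how it is used in~\citep{sutton:99,silver:14}.
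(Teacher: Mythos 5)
Your argument is correct and is essentially the standard proof: the paper itself defers to \citep{silver:14}, and the proof there proceeds exactly as you propose—stationarity of the minimizer in C1 gives $\expec\big[\grad_\wt\bepsilon(\sta)\cdot\bepsilon(\sta)\big]=\bZe$, C2 (read as linearity of $\grad_\act Q^\wt$ in $\wt$ with features $\grad_\theta\bmu_\theta(\sta)$) identifies $\grad_\wt\bepsilon(\sta)$ with $\grad_\theta\bmu_\theta(\sta)$, and the deterministic policy gradient theorem then yields the claim. Your cautionary remarks about holding the evaluation point fixed and reading C2 as an identity in $\wt$ match how the cited proof uses these conditions.
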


\begin{proof}
	See \citep{silver:14}.
\end{proof}

Having stated the compatibility condition, it is worth revisiting the problems that we propose to tackle in the paper. The first problem is to directly estimate the gradient of the value function, as required by Eq.~\eqref{eq:val_estimate} in condition \emph{C1}. The standard approach used in the literature is to estimate the value function, or the closely related advantage function, using temporal difference learning, and then compute the derivative of the estimate. The next section shows how the gradient can be estimated directly.

The second problem relates to the compatibility condition on policy and value gradients required by Eq.~\eqref{eq:grad_compat} in condition \emph{C2}. The only function approximation satisfying \emph{C2} that has been proposed is

\begin{eg}[standard value function approximation]\label{eg:advantage}\eod
	Let $\bphi(\sta)$ be an $m$-dimensional feature representation on states and set $\bphi(\sta,\act) := \grad_\theta\bmu_\theta(\sta)\cdot\big(\act-\bmu_\theta(\sta)\big)$.
	Then the value function approximation 
	\begin{equation}
		Q^{\vt,\wt}(\sta,\act) 
		= \underbrace{\left\langle \bphi(\sta,\act),\wt\right\rangle}_{\text{advantage function}} + \big\langle \bphi(\sta),\vt\big\rangle
		= (\act-\bmu_\theta(\sta))^\intercal \cdot\grad_\theta\bmu_\theta(\sta)^\intercal\cdot \wt +\bphi(\sta)^\intercal\cdot\vt.
	\end{equation}
	satisfies condition \emph{C2} of Theorem~\ref{t:compat}.
\end{eg}

The approximation in Example~\ref{eg:advantage} encounters serious problems when applied to \emph{deep} policies, see discussion in section~\ref{sec:problem}.


\section{Learning Value Gradients}
\label{sec:pbp}

In this section, we tackle the first problem by modifying temporal-difference (TD) learning so that it directly estimates the gradient of the value function. First, we developed a new approach to estimating the gradient of a black-box function at a point, based on perturbing the function with gaussian noise. It turns out that the approach extends easily to learning the gradient of a black-box function across its entire domain. Moreover, it is easy to combine with neural networks and temporal difference learning.

\subsection{Estimating the gradient of an unknown function at a point}

Gradient estimates have been intensively studied in bandit problems, where rewards (or losses) are observed but labels are not. Thus, in contrast to supervised learning where it is possible to compute the gradient of the loss, in bandit problems the gradient must be estimated. More formally, consider the following setup.

\begin{defn}[zeroth-order black-box]\label{def:bb}\eod
		A function $f:\bR^d\rightarrow \bR$ is a \textbf{zeroth-order black-box} if it can only be queried for \emph{zeroth-order} information. That is, User can request the value $f(x)$ of $f$ at any point $x\in\bR^d$, but cannot request the gradient of the function.

		We use the shorthand \emph{black-box} in what follows.
\end{defn}

The black-box model for optimization was introduced in \citep{nemirovski:83}, see \citep{raginsky:11} for a recent exposition. In those papers, a black-box consists in a \emph{first-order oracle} that can provide both zeroth-order information (the value of the function) and first-order information (the gradient or subgradient of the function).

\begin{rem}[reward function is a black-box; value function is not]\eod
	The reward function $r(\sta,\act)$ is a black box since Nature does not provide gradient information. The value function $Q^{\bmu_\theta}(\sta,\act) = \expec[r_1^\gamma | {\mathbf S}_1=\sta,{\mathbf A}_1=\act;\bmu_\theta]$ is \emph{not even} a black-box: it cannot be queried directly since it is defined as the expected discounted \emph{future} reward. It is for this reason the gradient perturbation trick must be combined with temporal difference learning, see section~\ref{sec:tdgl}.
\end{rem}

An important insight is that the gradient of an unknown function at a specific point can be estimated by perturbing its input \citep{flaxman:05}. For example, for small $\delta>0$ the gradient of $f:\bR^d\rightarrow \bR$ is approximately $\grad f(\x)_{|\x=\bmu}\approx d\cdot\expec_\bu[\frac{f(\bmu+\delta \bu)}{\delta} \bu]$ where the expectation is over vectors sampled uniformly from the unit sphere.

The following lemma provides a simple method for estimating the gradient of a function \emph{at a point} based on Gaussian perturbations:

\begin{lem}[gradient perturbation trick]\label{lem:gradient}\eod	
	The gradient of differentiable $f:\bR^d\rightarrow \bR$ at $\bmu\in\bR^d$ is
	\begin{equation}
		\label{e:grad}
		\grad_\x f(\x)_{|\x=\bmu} = \lim_{\sigma^2\rightarrow0}\argmin_{\wt\in\bR^d} \left\{\min_{b\in \bR}\expec_{\bepsilon\sim N(\bZe,\sigma^2\cdot \bI_d)}\left[\Big( f(\bmu+\bepsilon)-\langle\wt,\bepsilon\rangle - b\Big)^2\right]\right\}.
	\end{equation}
\end{lem}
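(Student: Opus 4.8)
The plan is to read the nested minimization as an ordinary linear least-squares problem in the Hilbert space $L^2(N(\bZe,\sigma^2\bI_d))$ — fitting the affine model $\bepsilon\mapsto\langle\wt,\bepsilon\rangle+b$ to $\bepsilon\mapsto f(\bmu+\bepsilon)$ — solve it in closed form using only the Gaussian moments $\expec[\bepsilon]=\bZe$ and $\expec[\bepsilon\bepsilon^\intercal]=\sigma^2\bI_d$, and then push $\sigma^2\to 0$ through a first-order Taylor expansion of $f$ at $\bmu$.

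First I would dispose of the inner minimization over $b$. For fixed $\wt$ the objective is a convex quadratic in $b$ whose minimizer is $b^\star=\expec_\bepsilon[f(\bmu+\bepsilon)-\langle\wt,\bepsilon\rangle]=\expec_\bepsilon[f(\bmu+\bepsilon)]$, the cross term vanishing because $\expec[\bepsilon]=\bZe$; note $b^\star$ is independent of $\wt$. Substituting it back, the outer problem becomes $\argmin_\wt\expec[(\tilde f(\bepsilon)-\langle\wt,\bepsilon\rangle)^2]$ with $\tilde f(\bepsilon):=f(\bmu+\bepsilon)-\expec[f(\bmu+\bepsilon)]$ centred. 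This is a strictly convex quadratic in $\wt$ (Hessian $2\expec[\bepsilon\bepsilon^\intercal]=2\sigma^2\bI_d\succ0$), so its first-order condition $\expec[\bepsilon\,\tilde f(\bepsilon)]=\expec[\bepsilon\bepsilon^\intercal]\wt=\sigma^2\wt$ pins down the unique minimizer; using $\expec[\bepsilon]=\bZe$ once more to drop the centring constant gives
\[
 \wt^\star(\sigma^2)=\frac{1}{\sigma^2}\,\expec_{\bepsilon\sim N(\bZe,\sigma^2\bI_d)}\big[\bepsilon\,f(\bmu+\bepsilon)\big].
\]

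To take the limit I would substitute $\bepsilon=\sigma\z$ with $\z\sim N(\bZe,\bI_d)$ and subtract the constant $f(\bmu)$ (legitimate since $\expec[\z]=\bZe$), obtaining $\wt^\star(\sigma^2)=\expec_\z\!\big[\z\cdot\tfrac{f(\bmu+\sigma\z)-f(\bmu)}{\sigma}\big]$. Differentiability of $f$ at $\bmu$ makes the integrand converge pointwise to $\z\langle\grad f(\bmu),\z\rangle$ as $\sigma\to0$, and $\expec_\z[\z\z^\intercal]=\bI_d$ then yields the claimed value $\grad_\x f(\x)_{|\x=\bmu}$. The one genuine obstacle is exchanging limit and expectation: differentiability at the single point $\bmu$ says nothing about $f$ away from $\bmu$, so a domination/growth hypothesis is needed (and is anyway required for the $L^2$ objective in the statement to be finite) — e.g. $f$ globally Lipschitz, in which case the difference quotient is bounded by $L\|\z\|^2\in L^1$ and dominated convergence applies directly; alternatively one splits $\expec_\z[\cdot]$ over the region $\{\|\bepsilon\|\le\sqrt\sigma\}$, where the Taylor remainder is uniformly $o(\|\bepsilon\|)$, and its complement, whose Gaussian mass decays faster than any power of $\sigma$. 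Everything else is the routine algebra of Gaussian second moments.
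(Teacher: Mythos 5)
Your proposal is correct, and it rests on the same mechanism as the paper's proof -- the fact that $\expec[\bepsilon\bepsilon^\intercal]=\sigma^2\bI_d$ makes the Gaussian least-squares fit return exactly the linear coefficient -- but you execute it differently and, frankly, more rigorously. The paper argues informally: ``by taking sufficiently small variance we can assume $f$ is locally linear,'' fixes $b=f(\bmu)$ rather than minimizing over it, and then verifies the claim only for the exactly linear function $f(\x)=\langle\vt,\x\rangle$ by solving the normal equations $(w^j-v^j)\sigma^2=0$. You instead keep $f$ general at finite $\sigma$: you eliminate $b$ by computing $b^\star=\expec[f(\bmu+\bepsilon)]$, obtain the closed-form minimizer $\wt^\star(\sigma^2)=\sigma^{-2}\expec[\bepsilon\, f(\bmu+\bepsilon)]$, and only then take $\sigma^2\to0$ via the rescaling $\bepsilon=\sigma\z$ and a dominated-convergence (or truncation) argument based on differentiability at $\bmu$. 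What your route buys is a genuine treatment of the limit that the paper elides -- in particular you correctly observe that because the Gaussian has unbounded support one cannot literally ``assume $f$ is locally linear,'' and that some growth/integrability hypothesis on $f$ (already implicit in the statement for the expectation to be finite) is needed to interchange limit and expectation; your closed form also exhibits $\wt^\star$ as a smoothed-gradient estimator, which is informative in its own right. What the paper's version buys is brevity and a transparent reduction to the linear normal equations, at the cost of leaving both the role of $b$ and the passage to the limit as heuristics. No gap in your argument beyond the integrability caveat you yourself flag.
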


\begin{proof}
	By taking sufficiently small variance, we can assume that $f$ is locally linear. Setting $b=f(\bmu)$ yields a line through the origin. It therefore suffices to consider the special case $f(\x)=\langle \vt,\x\rangle$. 

	Setting
	\begin{equation}
		\wt^* =\argmin_{\wt\in\bR^d}\expec_{\bepsilon\sim N(\bZe,\sigma^2\cdot\bI_d)} \left[\frac{1}{2}\Big(\langle\wt,\bepsilon\rangle-\langle\vt,\bepsilon\rangle\Big)^2\right],
	\end{equation}
	we are required to show that $\wt^*=\vt$. The problem is convex, so setting the gradient to zero requires to solve $0 = \expec\big[\langle\wt-\vt,\bepsilon\rangle\cdot \bepsilon\big]$, which reduces to solving the set of linear equations
	\begin{equation}
		\sum_{i=1}^d (w^i-v^i)\expec[\epsilon^i\epsilon^j] = (w^j-v^j)\expec[(\epsilon^j)^2]=(w^j-v^j)\cdot \sigma^2=0\qquad \text{ for all $j$}.
	\end{equation}
	The first equality holds since $\expec[\epsilon^i\epsilon^j]=0$. It follows immediately that $\wt^*=\vt$.
\end{proof}

\subsection{Learning gradients across a range}
\label{sec:learning_gradients}

The solution to the optimization problem in Eq.~\eqref{e:grad} is the gradient $\grad f(\x)$ of $f$ at a particular $\bmu\in\bR^d$. The next step is to learn a function $\Qt^\Wt:\bR^d\rightarrow \bR^d$ that approximates the gradient across a range of values. 

More precisely, given a sample $\{\x_i\}_{i=1}^n\sim \bP_X$ of points, we aim to find
\begin{equation}
	\Wt^* := \argmin_\Wt\sum_{i=1}^n\left[\left\|\grad f(\x_i)- \Qt^\Wt(\x_i)\right\|^2\right].
	\label{eq:grad_optimal}
\end{equation}
The next lemma considers the case where $Q^\vt$ and $\Qt^\Wt$ are linear estimates, of the form $Q^\vt(\x) := \langle\bphi(\x),\vt\rangle,$ and $\Qt^{\Wt}(\x)=\Wt\cdot \bpsi(\x)$ for fixed representations $\bphi:X\rightarrow\bR^m$ and $\bpsi:X\rightarrow\bR^n$.

\begin{lem}[gradient learning]\label{lem:gradl}\eod
	Let $f:\bR^d\rightarrow \bR$ be a differentiable function. Suppose that $\bphi:X\rightarrow \bR^m$ and $\bpsi:X\rightarrow\bR^n$ are representations such that there exists an $m$-vector $\vt^*$ and a $(d\times n)$-matrix $\Wt^*$ satisfying  $f(\x) = \langle\bphi(\x),\vt^*\rangle$ and $\grad f=\Wt^*\cdot \psi(\x)$ for all $\x$ in the sample.

	If we define loss function 
	\begin{equation}
 		\ell(\Wt,\Vt,\x,\sigma) = \expec_{\bepsilon} \left[\Big(f(\x+\bepsilon) - \langle \Qt^{\Wt}(\x),\bepsilon\rangle - Q^\Vt(\x)\Big)^2\right].	 	
	\end{equation}	
	then
	\begin{equation}
		\label{e:neural_grad}
 		\Wt^* = \lim_{\sigma^2\rightarrow 0}\argmin_{\Wt} \min_\Vt \expec_{\x\sim \hat{\bP}} \big[\ell(\Wt,\Vt,\x,\sigma)\big].	
	\end{equation}	
\end{lem}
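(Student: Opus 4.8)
\noindent The plan is to read the statement as the ``global'' version of the gradient perturbation trick: Lemma~\ref{lem:gradient} recovers $\grad f$ at a single point, whereas here we want to recover it simultaneously at every point of the sample, using the two realizability hypotheses to argue that one pair $(\Wt^*,\vt^*)$ hits all of the pointwise optima at once. Concretely I would proceed in three steps: (i) expand the squared loss at a fixed input so that it splits into a gradient-fitting piece scaled by $\sigma^2$, plus a value-fitting piece; (ii) average over the sample and observe that the outer minimization over $\Wt$ and the inner minimization over $\Vt$ then decouple at leading order in $\sigma^2$; (iii) use the hypotheses to identify $(\Wt^*,\vt^*)$ as the minimizer, and let $\sigma^2\to0$.

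For the expansion, fix $\x$ and write $f(\x+\bepsilon)=f(\x)+\langle\grad f(\x),\bepsilon\rangle+r(\x,\bepsilon)$, with $r$ the first-order Taylor remainder. Substituting into the integrand of $\ell$, the residual becomes $\langle\grad f(\x)-\Qt^\Wt(\x),\bepsilon\rangle+\big(f(\x)-Q^\Vt(\x)\big)+r(\x,\bepsilon)$; the first term has mean zero in $\bepsilon$ while the second is constant in $\bepsilon$, so their cross term drops out, and using $\expec[\bepsilon]=\bZe$ and $\expec[\bepsilon\bepsilon^\intercal]=\sigma^2\bI_d$ one gets
\[
\ell(\Wt,\Vt,\x,\sigma)\;=\;\sigma^2\big\|\grad f(\x)-\Qt^\Wt(\x)\big\|^2\;+\;\big(f(\x)-Q^\Vt(\x)\big)^2\;+\;o(\sigma^2).
\]
Averaging over $\x\sim\hat\bP$ preserves this form, the leading part being $\sigma^2\,\expec_\x\big[\|\grad f(\x)-\Wt\bpsi(\x)\|^2\big]+\expec_\x\big[(f(\x)-\langle\bphi(\x),\vt\rangle)^2\big]$.

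Now minimize. The $\Vt$-dependent term does not involve $\Wt$, so $\min_\Vt$ contributes only a $\Wt$-independent constant, and the positive factor $\sigma^2$ in front of the $\Wt$-dependent term is irrelevant to an argmin; hence at leading order $\argmin_\Wt\min_\Vt$ collapses to $\argmin_\Wt\expec_\x[\|\grad f(\x)-\Wt\bpsi(\x)\|^2]$. The hypotheses state precisely that $\Wt^*\bpsi(\x)=\grad f(\x)$ and $\langle\bphi(\x),\vt^*\rangle=f(\x)$ at every sample point, so $(\Wt^*,\vt^*)$ makes both non-negative terms vanish and therefore attains the minimum; in particular $\Wt^*$ is the (under the uniqueness implicit in the statement, unique) minimizing matrix, and letting $\sigma^2\to0$ gives the result.

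I expect the genuine work — just as in the proof of Lemma~\ref{lem:gradient}, which simply declares $f$ locally linear — to be the two limiting steps that the algebra above glosses over. First, the Taylor remainder must be controlled uniformly enough that it is truly lower order than the $\sigma^2$-scaled gradient term; I would either pass to the local-linearity idealization (in which the expansion is exact, with no remainder and $(\Wt^*,\vt^*)$ driving the loss to zero) or assume $\grad f$ is Lipschitz, so that $r(\x,\bepsilon)=O(\|\bepsilon\|^2)$ and $\expec_\bepsilon[r(\x,\bepsilon)^2]=O(\sigma^4)$ with the Gaussian tails under control; one should also note that, strictly, $Q^\Vt$ fits the smoothed value $\expec_\bepsilon[f(\x+\bepsilon)]$, which tends to $f(\x)$ only as $\sigma\to0$. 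Second, interchanging $\lim_{\sigma^2\to0}$ with $\argmin$ requires the minimizer of the gradient-fitting objective to be isolated (again implicit in writing ``$\Wt^*=\lim\cdots$''), after which a standard stability-of-argmin / epi-convergence argument closes the gap. Everything else is routine bookkeeping.
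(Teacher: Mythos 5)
Your argument is correct and is essentially the paper's own route: the paper's proof is literally the one-liner ``Follows from Lemma~\ref{lem:gradient}'', i.e.\ apply the pointwise gradient perturbation trick at each sample point and let the realizability hypotheses on $\bphi,\bpsi$ make a single pair $(\Wt^*,\vt^*)$ attain all pointwise optima simultaneously, which is exactly your decoupling step. Your explicit Taylor expansion, the $\sigma^2$-scaling of the gradient-fitting term, and the remarks on the remainder and on interchanging $\lim_{\sigma^2\to 0}$ with $\argmin$ simply spell out (somewhat more carefully than the paper does) what that citation leaves implicit.
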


\begin{proof}
	Follows from Lemma~\ref{lem:gradient}.
\end{proof}

In short, the lemma reduces gradient estimation to a simple optimization problem \emph{given a good enough representation}. Jumping ahead slightly to section~\ref{sec:dpg}, we ensure that our model has good enough representations by constructing two neural networks to learn them. The first neural network, $Q^\Vt:\bR^d\rightarrow\bR$, learns an approximation to $f(\x)$ that plays the role of the baseline $b$. The second neural network, $\Qt^\Wt:\bR^d\rightarrow\bR^d$ learns an approximation to the gradient.

\subsection{Temporal difference learning}

Recall that $Q^{\bmu}(\sta,\act)$ is the expected value of a state-action pair given policy $\bmu$. It is never observed directly, since it is computed by discounting over future rewards. TD-learning is a popular approach to estimating $Q^{\bmu}$ through dynamic programming \citep{sutton:98}. 

We quickly review TD-learning. Let $\bphi:\cS\times\cA\rightarrow\bR^m$ be a fixed representation. The goal is to find a value-estimate
\begin{equation}
	Q^\vt(\sta,\act) := \langle\bphi(\sta,\act),\vt\rangle,
\end{equation}
where $\vt$ is an $m$-dimensional vector, that is as close as possible to the true value function. If the value-function were known, we could simply minimize the mean-square error with respect to $\vt$:
\begin{equation}
	\label{e:MSE}
	\bell_{MSE}(\vt) = 
	\expec_{(\sta,\act)\sim (\rho^{\bmu}, \bmu)}\left[\Big(
	Q^{\bmu}(\sta,\act) - Q^\vt(\sta,\act) 
	\Big)^2\right].
\end{equation}
Unfortunately, it is impossible to minimize the mean-square error directly, since the value-function is the expected discounted future reward, rather than the reward. That is, the value function is not provided explicitly by the environment -- not even as a black-box. The Bellman error is therefore used a substitute for the mean-square error:
\begin{equation}
	\label{e:MSBE}
	\bell_{BE}(\vt) = 
	\expec_{(\sta,\act)\sim (\rho^{\bmu}, \bmu)}\Big[\Big(
	\overbrace{\underbrace{r(\sta,\act) + \gamma Q^{\vt}(\sta',\bmu(\sta'))}_{\approx Q^{\bmu}(\sta,\act)} - Q^\vt(\sta,\act)}^{\text{TD-error, }\delta}
	\Big)^2\Big]
\end{equation}
where $\sta'$ is the state subsequent to $\sta$. 

Let $\delta_t = r_t - Q^{\vt}(\sta_t,\act_t) + \gamma Q^{\vt}(\sta_{t+1},\bmu_\theta(\sta_{t+1}))$ be the TD-error. TD-learning updates $\vt$ according to
\begin{equation}
	\label{e:TD}
	\vt_{t+1}\leftarrow 
	\vt_t + \eta_t\cdot\delta_t \cdot \grad_\vt Q^\vt(\sta_t,\act_t) 
	= \vt_t + \eta_t\cdot\delta_t \cdot \bphi(\sta,\act),
\end{equation}
where $\eta_t$ is a sequence of learning rates. The convergence properties of TD-learning and related algorithms have been studied extensively, see \citep{tsitsiklis:97,dann:14}.

\subsection{Temporal difference gradient (TDG) learning}
\label{sec:tdgl}

Finally, we apply temporal difference methods to estimate the \emph{gradient}%
\footnote{Residual gradient (RG) and gradient temporal difference (GTD) methods were introduced in \citep{baird:95,sutton:09,sutton:09a}. The similar names may be confusing. RG and GTD methods are TD methods derived from gradient descent. In contrast, we develop a TD-based approach to \emph{learning gradients}. The two approaches are thus complementary and straightforward to combine. However, in this paper we restrict to extending vanilla TD to learning gradients.} 
of the value function, as required by condition \emph{C1} of Theorem~\ref{t:compat}. We are interested in gradient approximations of the form
\begin{equation}
	Q^\Wt(\sta,\act,\bepsilon) := \langle \Qt^{\Wt}(\sta,\act),\bepsilon\rangle= \langle\Wt\cdot \bpsi(\sta,\act),\bepsilon\rangle,
\end{equation}
where $\bpsi:\cS\times\cA\rightarrow\bR^n$ and $\Wt$ is a $(d\times n)$-dimensional matrix. The goal is to find $\Wt^*$ such that $\Qt^{\Wt^*}(\sta,\act)\approx \grad_{\bepsilon} Q^{\bmu}(\sta,\act,\bepsilon)_{|\bepsilon=\bZe} = \grad_{\act} Q^{\bmu}(\sta,\act)_{|\act=\bmu_\theta(\sta)}$ for all sampled state-action pairs. 

It is convenient to introduce notation $Q^{\bmu}(\sta,\act,\bepsilon):=Q^{\bmu}(\sta,\act+\bepsilon)$ and shorthand $\tilde{\sta}:=(\sta,\bmu_\Theta(\sta))$.  Then, analogously to the mean-square, define the perturbed gradient error: 
\begin{equation}
	\bell_{PGE}(\vt, \Wt;\sigma^2) 
	= \expec_{\sta\sim\rho^{\bmu}}\expec_{\bepsilon}\left[\Big(
	Q^{\bmu}(\tilde{\sta},\bepsilon) - \big\langle \Qt^\Wt(\tilde{\sta}),\bepsilon\big\rangle - Q^\vt(\tilde{\sta})
	\Big)^2\right],
\end{equation}
Given a good enough representation, Lemma~\ref{lem:gradl} guarantees that minimizing the perturbed gradient error yields the gradient of the value function. Unfortunately, as discussed above, the value function cannot be queried directly. We therefore introduce the Bellman gradient error as a proxy
\begin{equation}
	\bell_{BGE}(\vt,\Wt;\sigma^2) 
	= 
	\expec_{\sta\sim\rho^{\bmu}}\expec_{\bepsilon}\Big[\Big(
	\overbrace{
	\underbrace{r(\tilde{\sta},\bepsilon) + \gamma Q^\vt\big(\tilde{\sta}')}_{\approx Q^{\bmu}(\tilde{\sta},\bepsilon)}
	- \left\langle \Qt^\Wt(\tilde{\sta}),\bepsilon\right\rangle - Q^\vt(\tilde{\sta})
	}^{\text{TDG-error, }\tdg}
	\Big)^2\Big].
\end{equation}
Set the TDG-error as
\begin{equation}
	\tdg_t = r(\tilde{\sta}_t\bepsilon)+\gamma Q^\vt(\tilde{\sta}_{t+1}) - \langle \Qt^\Wt(\tilde{\sta}_t),\bepsilon\rangle - Q^\vt(\tilde{\sta}_t)
\end{equation}	
and, analogously to Eq.~\eqref{e:TD}, define the TDG-updates
\begin{align}
	\vt_{t+1} & 
	\leftarrow \vt_t + \eta_t\cdot\xi_t \cdot \grad_\vt Q^\vt(\tilde{\sta}_t) 
	= \vt_t + \eta_t\cdot\xi_t \cdot \bphi(\tilde{\sta}_t)\\
	\Wt_{t+1} & \leftarrow 
	\Wt_t + \eta_t\cdot\xi_t \cdot \grad_\Wt Q^{\Wt}(\tilde{\sta}_t)
	= \Wt_t + \eta_t\cdot\xi_t \cdot \bepsilon\otimes \bpsi(\tilde{\sta}_t),
\end{align}
where $\bepsilon\otimes \bpsi(\tilde{\sta}_t)$ is the $(d\times n)$ matrix given by the outer product. We refer to $\xi\cdot \bepsilon$ as the \textbf{perturbed TDG-error}. 

The following \emph{extension theorem} allows us to import guarantees from temporal-difference learning to temporal-difference gradient learning.

\begin{thm}[zeroth to first-order extension]\label{thm:extension}\eod
	Guarantees on TD-learning extend to TDG-learning.
\end{thm}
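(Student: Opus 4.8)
The plan is to establish a precise correspondence between the ordinary TD setup (state-action value estimation with a linear representation $\bphi$ and update rule~\eqref{e:TD}) and the TDG setup (gradient estimation with the stacked representation and updates for $\vt$ and $\Wt$). The key observation is that the TDG-updates are themselves ordinary TD-updates, but for an augmented problem: if I define the augmented parameter vector $\tilde{\vt} := (\vt, \vc(\Wt))$ and the augmented (random, $\bepsilon$-dependent) feature map $\tilde{\bphi}(\sta,\act,\bepsilon) := \big(\bphi(\tilde{\sta}),\, \bepsilon\otimes\bpsi(\tilde{\sta})\big)$, then the value estimate becomes the single inner product $\langle \tilde{\bphi}(\sta,\act,\bepsilon), \tilde{\vt}\rangle = \langle\Qt^\Wt(\tilde{\sta}),\bepsilon\rangle + Q^\vt(\tilde{\sta})$, the TDG-error $\tdg_t$ is exactly the TD-error for this augmented estimate, and the joint $(\vt,\Wt)$ update is exactly~\eqref{e:TD} with $\bphi$ replaced by $\tilde{\bphi}$. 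So structurally TDG-learning \emph{is} TD-learning on an enlarged state space (or rather, an enlarged observation: the perturbation $\bepsilon$ is drawn i.i.d.\ and can be folded into the transition kernel).

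First I would make this reduction explicit: write down the augmented MDP whose "state" at time $t$ carries the extra i.i.d.\ coordinate $\bepsilon_t\sim N(\bZe,\sigma^2\bI_d)$, verify that $r(\tilde{\sta}_t,\bepsilon_t)+\gamma Q^\vt(\tilde{\sta}_{t+1}) - \langle\tilde{\bphi},\tilde{\vt}\rangle$ matches the TD-error for the augmented reward $r(\tilde{\sta},\bepsilon)$ and the augmented value estimate, and check that the stochastic-approximation update for $\tilde{\vt}$ coincides with the concatenation of the $\vt$- and $\Wt$-updates. Then any convergence/consistency theorem for linear TD — e.g.\ the projection-onto-feature-span fixed-point characterization and the $\bell_{BE}$-minimization guarantees of \citep{tsitsiklis:97,dann:14} — applies verbatim to the augmented problem and hence to TDG-learning. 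Second, I would invoke Lemma~\ref{lem:gradl} to identify the limiting ($\sigma^2\to0$) fixed point of the augmented TD recursion with the true value gradient $\grad_\act Q^{\bmu}$: the Bellman gradient error $\bell_{BGE}$ is the Bellman analogue of the perturbed gradient error $\bell_{PGE}$, exactly as $\bell_{BE}$ is the Bellman analogue of $\bell_{MSE}$, so the existing "Bellman error stands in for mean-square error" arguments transfer, and Lemma~\ref{lem:gradl} upgrades the $\bell_{PGE}$-minimizer to the exact gradient given a sufficiently rich representation.

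The main obstacle — and the place where the argument needs care rather than a one-line "follows from" — is the $\bepsilon$-dependence of the augmented features. Standard linear-TD analyses assume a \emph{fixed} feature map, whereas here $\tilde{\bphi}$ depends on the freshly-sampled Gaussian $\bepsilon_t$; one must check that the relevant feature-covariance matrix $\expec[\tilde{\bphi}\tilde{\bphi}^\intercal]$ (now an expectation over both $\rho^{\bmu}$ and $\bepsilon$) is still well-defined and nonsingular on the span of interest, so that the TD fixed point exists and is unique, and that the noise introduced by sampling $\bepsilon$ satisfies the martingale-difference and bounded-variance conditions needed for stochastic-approximation convergence. A secondary subtlety is the interchange of the $\sigma^2\to0$ limit with the fixed-point/convergence statements — i.e.\ making sure the fixed point varies continuously in $\sigma^2$ down to the limit — which is precisely what Lemma~\ref{lem:gradient} and Lemma~\ref{lem:gradl} are designed to handle at the population level, and which I would lift to the Bellman setting by the same $\bell_{BGE}\leftrightarrow\bell_{PGE}$ substitution. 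Once these points are dispatched, the theorem follows: every guarantee available for TD on the augmented problem is, by construction, a guarantee for TDG on the original one.
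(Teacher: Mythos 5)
Your reduction is exactly the paper's proof: fold the perturbation $\bepsilon$ into the state-action pair, observe that $Q^\vt(\tilde{\sta}) + \langle\Qt^\Wt(\tilde{\sta}),\bepsilon\rangle$ is a single linear function approximation in the concatenated parameters $(\vt,\Wt)$ with features $\big(\bphi(\tilde{\sta}),\,\bepsilon\otimes\bpsi(\tilde{\sta})\big)$, so TD guarantees transfer verbatim, and then invoke Lemma~\ref{lem:gradl} to identify the limit with the value gradient. The additional points you raise about the $\bepsilon$-dependent feature covariance and the $\sigma^2\to 0$ interchange are sensible caveats that the paper's two-paragraph proof leaves implicit, but the argument itself is the same.
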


The idea is to reformulate TDG-learning as TD-learning, with a slightly different reward function and function approximation. Since the function approximation is still linear, any guarantees on convergence for TD-learning transfered automatically to TDG-learning.
\vspace{2mm}

\begin{proof}
	First, we incorporate $\bepsilon$ into the state-action pair. 
	Define $\tilde{r}(\sta,\act,\bepsilon):= r(\sta,\act+\bepsilon)$ and 
	\begin{equation}
		\tilde{\bpsi}(\sta,\act,\bepsilon) =  \bepsilon\otimes\bpsi(\sta,\act).
	\end{equation}	
	Second, we define a dot product on matrices of equal size by flattening them down to vectors. More precisely, given two matrices ${\mathbf A}$ and ${\mathbf B}$ of the same dimension $(m\times n)$, define the dot-product $\langle{\mathbf A},{\mathbf B}\rangle = \sum_{i,j=1}^{m,n}A_{ij}B_{ij}$. It is easy to see that
	\begin{equation}
		\Qt^\Wt(\sta,\act) := \langle\Wt\cdot\bpsi(\sta,\act),\bepsilon\rangle
		= \langle\tilde{\bpsi}(\sta,\act,\bepsilon),\Wt\rangle.
	\end{equation}
	The TDG-error can then be rewritten as	
	\begin{equation}
		\xi_t = \tilde{r}(\sta,\act,\bepsilon) + \gamma Q^{\vt,\Wt}(\sta',\act',\bepsilon') - Q^{\vt,\Wt}(\sta,\act,\bepsilon)
	\end{equation}
	where $Q^{\vt,\Wt}(\sta,\act,\bepsilon) = \langle\phi(\sta,\act),\vt\rangle + \langle\tilde{\bpsi}(\sta,\act,\bepsilon),\Wt\rangle$ is a linear function approximation. 

	If we are in a setting where TD-learning is guaranteed to converge to the value-function, it follows that TDG-learning is also guaranteed to converge -- since it is simply a different linear approximation. Thus, $Q^\mu(\tilde{\sta},\bepsilon)\approx Q^\vt(\tilde{\sta})+\Qt^\Wt(\tilde{\sta},\bepsilon)$ and the result follows by Lemma~\ref{lem:gradl}.
\end{proof}

\section{Algorithm: Value-Gradient Backpropagation}
\label{sec:dpg}

This section presents our model, which consists of three coupled neural networks that learn to estimate the value function, its gradient, and the optimal policy respectively. 

\begin{defn}[deviator-actor-critic]\label{def:beh_crit}\eod
	The \textbf{deviator-actor-critic (DAC)} model consists in three neural networks:
	\begin{itemize}
		\item \textbf{actor-network} with policy $\bmu_\Theta:\cS\rightarrow \cA\subset\bR^d$;
		\item \textbf{critic-network}, $Q^\Vt:\cS\times\cA\rightarrow \bR$, that estimates the value function; and 
		\item \textbf{deviator-network}, $\Qt^\Wt:\cS\times\cA\rightarrow\bR^d$, that estimates the gradient of the value function. 
	\end{itemize}
	Gaussian noise is added to the policy during training resulting in actions $\act=\bmu_\Theta(\sta)+ \bepsilon$ where $\bepsilon\sim N(\bZe,\sigma^2\cdot\bI_d)$. The outputs of the critic and deviator are combined as
	\begin{equation}
		Q^{\Wt,\Vt}\Big(\sta,\bmu_\Theta(\sta),\bepsilon\Big) 
		= Q^\Vt\big(\sta,\bmu_\Theta(\sta)\big) 
		+ \Big\langle \Qt^\Wt\big(\sta,\bmu_\Theta(\sta)\big),\bepsilon\Big\rangle.
	\end{equation}	
\end{defn}

The Gaussian noise plays two roles. Firstly, it controls the explore/exploit tradeoff by controlling the extent to which Actor deviates from its current optimal policy. Secondly, it controls the ``resolution'' at which Deviator estimates the gradient. 

The three networks are trained by backpropagating three different signals. Critic, Deviator and Actor backpropagate the TDG-error, the perturbed TDG-error, and Deviator's gradient estimate respectively; see Algorithm~\ref{alg:qprop}. An explicit description of the weight updates of individual units is provided in Appendix~\ref{sec:explicit}.

Deviator estimates the gradient of the value-function with respect to \emph{deviations $\bepsilon$ from the current policy}. Backpropagating the gradient through Actor allows to estimate the influence of Actor-parameters on the value function as a function of their effect on the policy.
\vspace{2mm}

\begin{algorithm}[H]
	\caption{\texttt{Value-Gradient Backpropagation ($\qprop$)}.\label{alg:qprop}}
	\DontPrintSemicolon
	\SetKwInOut{Input}{input}
 	\For{rounds $t =1, 2, \ldots, T$}{
		Network gets state $\sta_t$, responds $\act_t=\bmu_{\Theta_t}(\sta_t)+\bepsilon$, gets reward $r_t$\; 
		Let $\tilde{\sta} := (\sta,\bmu_\Theta(\sta))$.
		\;
		$\tdg_t \longleftarrow r_t  + \gamma Q^{\Vt_t}(\tilde{\sta}_{t+1})
		- Q^{\Vt_t}(\tilde{\sta}_t) - \big\langle \Qt^{\Wt_t}(\tilde{\sta}_t),\bepsilon\big\rangle$
		\qquad\quad\texttt{// compute TDG-error}
		\;
		$\Theta_{t+1} 
			\longleftarrow \Theta_{t} 
			+ \eta^A_t \cdot\grad_{\Theta}\bmu_{\Theta_t}(\sta_t)\cdot \Qt^{\Wt_t}\big(\tilde{\sta}_t\big)$
			\quad\quad\qquad\qquad\texttt{// backpropagate $\mathtt \Qt^\Wt$}
		\;
		$\Vt_{t+1} 
			\longleftarrow \Vt_t+ \eta^C_t\cdot \tdg_t\cdot \grad_\Vt Q^{\Vt_t}(\tilde{\sta}_t)$
			\qquad\qquad\qquad\qquad\qquad\texttt{// backpropagate $\tdg$}
		\;
		$\Wt_{t+1} 
			\longleftarrow \Wt_t+ \eta^D_t \cdot\tdg_t\cdot\grad_\Wt \Qt^{\Wt_t}(\tilde{\sta}_t)\cdot \bepsilon$
			\quad\qquad\qquad\quad\quad\texttt{// backpropagate $\tdg\cdot\bepsilon$}
		\;
	}
\end{algorithm}
\vspace{2mm}

Critic and Deviator learn representations suited to estimating the value function and its gradient respectively. Note that even though the gradient is a linear function \emph{at a point}, it can be a highly nonlinear function in general. Similarly, Actor learns a \emph{policy} representation. 

We set the learning rates of Critic and Deviator to be equal $(\eta^C_t=\eta^D_t)$ in the experiments in section~\ref{sec:experiments}. However, the perturbation $\bepsilon$ has the effect of slowing down and stabilizing Deviator updates:

\begin{rem}[stability]\label{rem:stability}\eod
	The magnitude of Deviator's weight updates depend on $\bepsilon\sim N(\bZe,\sigma^2\cdot\bI_d)$ since they are computed by backpropagating the perturbed TDG-error $\xi\cdot\bepsilon$. Thus as $\sigma^2\rightarrow0$, Deviator's learning rate essentially tends to zero. In general, Deviator learns more slowly than Critic. This has a stabilizing effect on the policy since Actor is insulated from Critic -- its weight updates only depend (directly) on the output of Deviator. 
\end{rem}
\section{Analysis: Deep Compatible Function Approximation}
\label{sec:thm_main}

Our main result is that the deviator's value gradient is compatible with the policy gradient of each unit in the actor-network -- considered as an actor in its own right:

\begin{thm}[deep compatible function approximation]\label{thm:main}\eod
	Suppose that all units are rectilinear or linear. Then for each Actor-unit in the Actor-network there exists a reparametrization of the value-gradient approximator, $\Qt^\Wt$, that satisfies the compatibility conditions in Theorem~\ref{t:compat}.
\end{thm}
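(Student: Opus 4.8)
The plan is to exploit the decomposition of the Actor-network into individual rectilinear/linear units, each of which is treated as its own actor acting "downstream" of its inputs and "upstream" of the rest of the network. Fix one such Actor-unit; call its scalar pre-activation output $\bmu^k_{\theta^k}$, where $\theta^k$ collects the incoming weights of that unit. The key structural fact I would establish first is that, because every unit is linear or rectilinear, the map from $\theta^k$ to the unit's output is (piecewise) linear in $\theta^k$ with the input activations as the feature vector: $\bmu^k_{\theta^k}(\sta) = \langle \bphi^k(\sta), \theta^k\rangle$ on each linear region, where $\bphi^k(\sta)$ is the vector of activations feeding into unit $k$. Consequently $\grad_{\theta^k}\bmu^k_{\theta^k}(\sta) = \bphi^k(\sta)$ (a.e.), which is exactly the kind of gradient-of-policy term that appears in the compatibility conditions C1 and C2 of Theorem~\ref{t:compat}, but now localized to a single unit.

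Next I would read off what the DAC updates do at the level of unit $k$. Actor-unit $k$ receives, by backpropagation of Deviator's estimate $\Qt^\Wt(\tilde\sta)$, an effective "action gradient" signal; pushing this through the chain rule and the identity $\grad_{\theta^k}\bmu^k = \bphi^k(\sta)$ shows that unit $k$'s weight update is precisely a deterministic-policy-gradient ascent step for a one-dimensional policy whose action is $\bmu^k$ and whose value-gradient is the appropriate backpropagated component of $\Qt^\Wt$. The task is then to produce a \emph{reparametrization} of $\Qt^\Wt$ — i.e. choose the state-action representation $\bpsi$ and reinterpret the matrix $\Wt$ relative to unit $k$ — so that the two bullet conditions hold for this one-dimensional policy: (C1) that the reparametrized $\Qt^\Wt$ minimizes the gradient error $\bell_{GE}$ for unit $k$'s action, which we already have from Lemma~\ref{lem:gradl} / Theorem~\ref{thm:extension} (the TDG machinery learns exactly $\grad_{\act}Q^\bmu$, and restricting/projecting to coordinate $k$ costs nothing); and (C2) policy-compatibility, $\grad_{\act^k}Q^\Wt_{|\act=\bmu_\Theta(\sta)} = \langle \grad_{\theta^k}\bmu^k_{\theta^k}(\sta), \wt^k\rangle$ for a suitable $\wt^k$ built from $\Wt$. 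The point is that Example~\ref{eg:advantage}'s representation $\bphi(\sta,\act)=\grad_\theta\bmu_\theta(\sta)(\act-\bmu_\theta(\sta))$, which is unworkable for a deep net globally, becomes workable \emph{per unit} precisely because $\grad_{\theta^k}\bmu^k = \bphi^k(\sta)$ is just the incoming activations, so the per-unit advantage feature is $\bphi^k(\sta)\otimes(\act^k - \bmu^k)$ — exactly the outer-product structure $\bepsilon\otimes\bpsi$ already appearing in the TDG-updates and in the proof of Theorem~\ref{thm:extension}.

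So the steps, in order, are: (i) decompose the Actor-network and fix unit $k$; (ii) prove the local linearity identity $\bmu^k_{\theta^k}(\sta)=\langle\bphi^k(\sta),\theta^k\rangle$ and hence $\grad_{\theta^k}\bmu^k=\bphi^k(\sta)$ for linear and rectilinear units, handling the rectilinear case by working on the (open, full-measure) linear region around the current activation; (iii) define the reparametrization of $\Qt^\Wt$ relative to unit $k$ by taking the $k$-th coordinate of Deviator's output and pairing it with the feature $\bphi^k(\sta)$, i.e. set $\wt^k$ so that $\langle\grad_{\theta^k}\bmu^k,\wt^k\rangle$ reproduces $\grad_{\act^k}Q^\Wt$; (iv) verify C2 is then an identity by construction; (v) verify C1 by invoking Theorem~\ref{thm:extension} and Lemma~\ref{lem:gradl} to conclude that the TDG-trained $\Qt^\Wt$ equals $\grad_\act Q^\bmu$ in the $\sigma^2\to0$ limit, and that projecting onto coordinate $k$ preserves the argmin property; (vi) check that the resulting one-dimensional deterministic policy gradient for unit $k$, with this compatible approximator plugged in, coincides with the $\Theta$-update in Algorithm~\ref{alg:qprop} restricted to $\theta^k$, so the hypotheses of Theorem~\ref{t:compat} are genuinely met.

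The main obstacle I expect is step (ii)–(iii) done carefully for \emph{rectilinear} units: the ReLU nonlinearity makes $\bmu^k$ only piecewise linear in $\theta^k$ and, worse, the activations $\bphi^k(\sta)$ feeding unit $k$ themselves depend on upstream weights through further ReLUs, so "the feature vector for unit $k$" is state-dependent and region-dependent. I would handle this by fixing the current parameter values and noise, observing that the gates are locally constant on a neighborhood of full measure, and arguing that the compatibility conditions — which are statements about gradients evaluated at the current $\theta$ and $\act=\bmu_\Theta(\sta)$ — only ever see the active linear piece; the footnote's remark that the argument also covers maxpooling and weight-tying is consistent with this "freeze the gates, then everything is linear" strategy. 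A secondary subtlety is bookkeeping: making sure the single matrix $\Wt$ simultaneously furnishes a valid reparametrization for \emph{every} unit $k$ (the theorem asks for one per unit, so in principle different $k$ may use different reinterpretations of the same $\Wt$), and that the $\sigma^2\to0$ limits in Lemma~\ref{lem:gradl} can be taken uniformly enough to apply to all units at once.
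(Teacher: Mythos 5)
Your outline matches the paper's strategy (per-unit decomposition of the Actor, freeze-the-gates linearity of the active submodel, a per-unit reparametrization of the Deviator, and the argmin condition inherited from the TDG results), but the step that actually carries the theorem is asserted rather than proved. In step (iii) you ``set $\wt^k$ so that $\langle\grad_{\theta^k}\bmu^k,\wt^k\rangle$ reproduces'' the relevant derivative of the Deviator's estimate, and in (iv) you declare C2 an identity by construction. As stated this is circular: compatibility requires a \emph{single constant} vector $\wt^k$ (on the active region), while both sides of the identity are state-dependent, so you must exhibit the reparametrization and show the state-dependence factors through $\bphi^k(\sta)$ on both sides. The paper does exactly this via its minimal-model and reduction lemmas (Lemmas~\ref{lem:loc_compat} and \ref{lem:reduction}): by Lemma~\ref{lem:structure} the Actor output decomposes as $\bmu_\Theta(\sta)=\bpi^k\cdot\langle\theta^k,\bphi^k(\sta)\rangle+\bmu_{\Theta^{-k}}(\sta)$, and --- a fact your write-up never invokes --- the \emph{Deviator's} active submodel is itself a linear map $\Wt'$, so that $\Qt^{\Wt}(\tilde{\sta})=(\Wt'\cdot\bpi^k)\cdot\langle\theta^k,\bphi^k(\sta)\rangle+(\text{terms not involving }\theta^k)$. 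This exhibits unit $k$'s interaction with the Deviator as $d$ copies of the minimal model with scalar weights $w=(\Wt'\bpi^k)_i$, and the explicit reparametrization $\tilde{\wt}^k=w\cdot\theta^k$ then makes both sides of the compatibility identity equal $w\cdot\langle\theta^k,\bphi^k(\sta)\rangle$, which is what ``exists a reparametrization'' means here.

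A related imprecision: the scalar that gets paired with $\bphi^k(\sta)$ is not ``the $k$-th coordinate of Deviator's output.'' Unit $k$ is a hidden unit; the signal it sees under backpropagation is the influence-weighted projection $\langle\bpi^k,\Qt^{\Wt}(\tilde{\sta})\rangle$ (equivalently, the entries of $\Wt'\bpi^k$), as in the explicit updates of Algorithm~\ref{alg:pb} and the local-policy-gradient lemma, and it is this object the reduction must target. Once the explicit construction above is inserted, your steps (v)--(vi) --- C1 via Lemma~\ref{lem:gradl} and Theorem~\ref{thm:extension}, noting that minimizing the Bellman gradient error over $\Wt,\Vt$ implicitly minimizes it over each reparametrized $\tilde{\wt}^k$ --- coincide with the paper's argument, and your handling of the ReLU gating (work on the active linear region at the current parameters) is the same as the paper's active-submodel remark.
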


The actor-network is thus a collection of interdependent agents that individually follow the correct policy gradients. The experiments below show that they also collectively converge on useful behaviors.

\paragraph{Overview of the proof.}
The next few subsections prove Theorem~\ref{thm:main}. We provide a brief overview before diving into the details.

Guarantees for temporal difference learning and policy gradients are typically based on the assumption that the value-function approximation is a \emph{linear} function of the learned parameters. However, we are interested in the case where Actor, Critic and Deviator are all neural networks, and are therefore highly nonlinear functions of their parameters. The goal is thus to relate the representations learned by neural networks to the prior work on linear function approximations.

To do so, we build on the following observation, implicit in \citep{srivastava:14}:

\begin{rem}[active submodels]\label{rem:active}\eod
	A neural network of $n$ linear and rectilinear units can be considered as a set of $2^n$ submodels, corresponding to different subsets of units. The active submodel at time $t$ consists in the active units (that is, the linear units and the rectifiers that do not output 0). 

	The active submodel has two important properties:
	\begin{itemize}
		\item it is a \emph{linear} function from inputs to outputs, since rectifiers are linear when active, and 
		\item at each time step, learning only occurs over the active submodels, since only active units update their weights.
	\end{itemize}
\end{rem}
The feedforward sweep of a rectifier network can thus be disentangled into two steps \citep{doco:15}. The first step, which is highly nonlinear, applies a gating operation that selects the active submodel -- by rendering various units inactive. The second step computes the output of the neural network via matrix multiplication. It is important to emphasize that although the  active submodel is a linear function from inputs to outputs, it is not a linear function of the weights. 

The strategy of the proof is to decompose the Actor-network in an interacting collection of agents, referred to as Actor-units. That is, we model each unit in the Actor-network as an Actor in its own right that. On each time step that an Actor-unit is active, it interacts with the Deviator-submodel corresponding to the current active submodel of the Deviator-network. The proof shows that each Actor-unit has compatible function approximation.

\subsection{Error backpropagation on rectilinear neural networks}

First, we recall some basic facts about backpropagation in the case of \emph{rectilinear} units.  Recent work has shown that replacing sigmoid functions with rectifiers $S(x)=\max(0,x)$ improves the performance of neural networks \citep{nair:10,glorot:11,zeiler:13,dahl:13}. 

Let us establish some notation. The output of a rectifier with weight vector $\wt$ is 
\begin{equation}
	S_{\wt}(\x) := S(\langle\wt,\x\rangle):=\max(0,\langle\wt,\x\rangle).
\end{equation}
The rectifier is \textbf{active} if $\langle\wt,\x\rangle>0$. We use rectifiers because they perform well in practice and have the nice property that units are \emph{linear} when they are active. 
The rectifier subgradient is the indicator function
\begin{equation*}
	\bbO(x):= \grad S(x) = \begin{cases}
		1 & x > 0\\
		0 & \text{else}.
	\end{cases}	
\end{equation*}

Consider a neural network of $n$ units, each equipped with a weight vector $\wt^j\in\ccH_j\subset\bR^{d_j}$. Hidden units are rectifiers; output units are linear. There are $n$ units in total. It is convenient to combine all the weight vectors into a single object; let $\Wt\subset\ccH=\prod_{j=1}^n\ccH_j \subset\bR^N$ where $N=\sum_{j=1}^nd_j$. The network is a function $F^\Wt:\bR^m\rightarrow \bR^d:\x_{\text{in}}\mapsto F^\Wt(\x_{\text{in}})=:\x_{\text{out}}$. 

The network has error function $\error(\x_\text{out},\y)$ with gradient $\subg = \grad_{\x_\text{out}}\error$. Let $x^j$ denote the output of unit $j$ and $\bphi^j(\x_{\text{in}})=(x^i)_{\{i:i\rightarrow j\}}$ denote its input, so that $x^j = S(\langle\wt^j,\bphi^j(\x_{\text{in}})\rangle$. Note that $\bphi^j$ depends on $\Wt$ (specifically, the weights of lower units) but this is supressed from the notation.

\begin{defn}[influence]\eod
	The \textbf{\emph{influence}} of unit $j$ on unit $k$ at time $t$ is $\pi^{j,k}_t := \frac{\dd x^k_t}{\dd x^j_t}$ \citep{bvb:15}. The influence of unit $j$ on the output layer is the vector $\bpi^{j}_t = \big(\pi^{j,k}_t\big)_{k\in \text{out}}$.
\end{defn}

The following lemma summarizes an analysis of the feedforward and feedback sweep of neural nets.
\begin{lem}[structure of neural network gradients]\label{lem:structure}\eod
	The following properties hold
	\begin{enumerate}[a.]
		\item \textbf{Influence.}\\
		A path is \textbf{\emph{active}} at time $t$ if all units on the path are firing. The influence of $j$ on $k$ is the sum of products of weights over all active paths from $j$ to $k$:
		\begin{equation}
			\pi^{j,k}_t = 
			\sum_{\{\alpha|j\rightarrow \alpha\}} w^{j,\alpha}\bbO^\alpha_t
			\left(\sum_{\{\beta|\alpha\rightarrow \beta\}}w^{\alpha,\beta}\bbO^\beta_t\left( \cdots \sum_{\{\omega|\omega\rightarrow k\}}w^{\omega,k}\bbO^k_t\right)\right).
		\end{equation}
		where $\alpha,\beta,\ldots,\omega$ refer to units along the path from $j$ to $k$.
		\item \textbf{Output decomposition.}\\
		The output of a neural network decomposes, relative to the output of unit $j$, as
		\begin{equation}
			F^\Wt(\x_{\text{in}}) = \bpi^{j}\cdot x^j + \bpi^{-j} \cdot \x_{\text{in}},
		\end{equation}
		where $\bpi^{-j}$ is the $(m\times d)$-matrix whose $(ik)^\text{th}$ entry is the sum over all active paths from input unit $i$ to output unit $k$ that do not intersect unit $j$.
		\item \textbf{Output gradient.}\\
		Fix an input $\x_{\text{in}}\in\bR^m$ and consider the network as a function from parameters to outputs $F^\bullet(\x_{\text{in}}):\ccH\rightarrow \bR^d:\Wt\mapsto F^\Wt(\x_{\text{in}})$ whose gradient is an $(N\times d)$-matrix. The $(ij)^\text{th}$-entry of the gradient is the input to the unit times its influence:
		\begin{equation}
			\Big(\nabla_\Wt F^\Wt(\x_{\text{in}})\Big)_{ij} = \begin{cases}
				\phi^{ij}(\x_{\text{in}})\cdot \bpi^{j} & \text{if unit $j$ is active} \\
				0 & \text{else.}
			\end{cases}			
		\end{equation}
		\item \textbf{Backpropagated error.}\\
		Fix $\x_{\text{in}}\in\bR^m$ and consider the function $\error(\Wt)= \error(F^\bullet(\x_{\text{in}}),\y):\ccH\rightarrow \bR:\Wt\mapsto \error(F^\Wt(\x_{\text{in}}),\y)$. Let $\subg = \grad_{\x_{out}}\error(\x_{out},y)$.

		The gradient of the error function is
		\begin{align}
			\label{eq:backprop}
			\left(\nabla_\Wt \error\right)_{ij} 
			& = \left\langle\subg, \left(\nabla_\Wt F^\Wt(\x_{\text{in}})\right)_{ij}\right\rangle\\
			& = \subg^\intercal\cdot \left(\nabla_\Wt F^\Wt(\x_{\text{in}})\right)_{ij}
			= \delta^j \cdot \bphi^j(\x_{\text{in}})
		\end{align}
		where the backpropagated error signal $\delta^j$ received by unit $j$ decomposes as $\delta^j = \left\langle \subg,\bpi^{j} \right\rangle$.
	\end{enumerate}	
\end{lem}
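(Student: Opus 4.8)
All four claims come from unwinding the feedforward recursion $x^k = S(\langle\wt^k,\bphi^k(\x_{\text{in}})\rangle)$ and applying the chain rule on the underlying DAG, so the plan is to establish (a) and then read off (b)--(d) from it. Throughout I would work at a generic input at which no unit has zero pre-activation (the exceptional set has measure zero); equivalently, fix the subgradient convention $\bbO(0)=0$, so that each rectifier is locally either the zero map or a linear map and every identity below is read for the resulting (sub)gradients.

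\emph{Part (a).} I would order the units topologically so that every edge points from a lower to a higher index and induct on the rank of $k$; the base case $k\in\text{in}$ is immediate. Since $\bphi^k$ only involves outputs $x^i$ of strictly lower units, differentiating $x^k=S(\langle\wt^k,\bphi^k\rangle)$ gives $\pi^{j,k}=\frac{\dd x^k}{\dd x^j}=\bbO^k\sum_{\{i:\,i\to k\}}w^{i,k}\frac{\dd x^i}{\dd x^j}$, with $\frac{\dd x^j}{\dd x^j}=1$. Feeding in the inductive hypothesis for each $\frac{\dd x^i}{\dd x^j}$, and using that every directed path $j\to k$ is uniquely a path $j\to i$ extended by the edge $i\to k$ to its penultimate unit $i$, one recovers the nested expression of the lemma; inactive paths drop out because some $\bbO^\alpha$ vanishes, which also explains why the informal phrase ``all units firing'' matches the explicit formula in which $\bbO^j$ is deliberately absent.

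\emph{Parts (b)--(d).} The same unfolding, applied to an output unit $k$, writes $F^\Wt(\x_{\text{in}})_k=\sum_{i\in\text{in}}\big(\sum_{P:\,i\to k\text{ active}}\weight(P)\big)x^i_{\text{in}}$, with $\weight(P)$ the product of the edge weights of $P$ times the indicators of its non-source units. Splitting paths by whether they meet $j$: those through $j$ factor as (active $\text{in}\to j$)$\,\cdot\,$(active $j\to k$) and sum to $\pi^{j,k}x^j$ by (a) together with the path-expansion of $x^j$ itself; the rest form the $j$-avoiding matrix $\bpi^{-j}$, giving (b). For (c), note that $w^{ij}$ enters the whole network only through the pre-activation of unit $j$, so $\frac{\dd x^j}{\dd w^{ij}}=\bbO^j\phi^{ij}(\x_{\text{in}})$, while $\frac{\dd F^\Wt}{\dd x^j}=\bpi^{j}$ is immediate from (b) (the $\bpi^{-j}$ term is independent of $x^j$, and $\bpi^j$ is locally constant in $x^j$); composing gives $(\nabla_\Wt F^\Wt)_{ij}=\bbO^j\phi^{ij}\bpi^{j}$, i.e. $\phi^{ij}\bpi^j$ if $j$ is active and $0$ otherwise. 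Part (d) is the outer chain rule: $\error(\Wt)=\error(F^\Wt(\x_{\text{in}}),\y)$ gives $(\nabla_\Wt\error)_{ij}=\langle\subg,(\nabla_\Wt F^\Wt)_{ij}\rangle$ with $\subg=\nabla_{\x_{\text{out}}}\error$, and substituting (c) turns this into $\phi^{ij}\langle\subg,\bpi^j\rangle$, which collected over the coordinates $i$ of unit $j$ is $\delta^j\bphi^j(\x_{\text{in}})$ with $\delta^j:=\langle\subg,\bpi^j\rangle$.

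The one place I expect to have to be careful is the indicator bookkeeping when a path is cut at $j$: each non-source unit of a path must carry exactly one factor $\bbO$, so I would have to check that $\bbO^j$ is counted once --- it lives inside $x^j$, not inside $\pi^{j,k}$ --- and that concatenating $\text{in}\to j$ with $j\to k$ neither drops nor doubles any indicator. Beyond that, everything is routine chain rule on a feedforward graph, valid off the measure-zero locus where some pre-activation is exactly zero, and I do not anticipate a real obstacle.
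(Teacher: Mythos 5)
Your proposal is correct and matches the paper's intent: the paper's own proof is just ``Direct computation,'' and your topological-order induction for the influence formula, the path-splitting argument for the output decomposition, and the chain-rule compositions for (c) and (d) are precisely that computation spelled out, including the right bookkeeping that $\bbO^j$ is carried by $x^j$ rather than by $\pi^{j,k}$. No gaps beyond the measure-zero caveat about zero pre-activations, which you already flag.
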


\begin{proof}
	Direct computation.
\end{proof}

The lemma holds generically for networks of rectifier and linear units. We apply it to actor, critic and deviator networks below.

\subsection{A minimal DAC model}

This subsection proves condition \emph{C1} of compatible function approximation for a minimal, linear Deviator-Actor-Critic model. The next subsection shows how the minimal model arises at the level of Actor-units.

\begin{defn}[minimal model]\eod
	The \textbf{minimal model} of a Deviator-Actor-Critic consists in an Actor with linear policy $\mu_\theta(\sta) = \langle\theta, \bphi(\sta)\rangle + \epsilon$, 
	where $\theta$ is an $m$-vector and $\epsilon$ is a noisy scalar. The Critic and Deviator together output:
	\begin{equation}
		Q^{w,\vt}(\sta,\mu_\theta(\sta),\epsilon) 
		= Q^{\vt}(\sta) + G^w(\mu_\theta(\sta),\epsilon)
		= \underbrace{\langle\bphi(\sta),\vt\rangle}_{\text{Critic}}
		+ \underbrace{\mu_\theta(\sta)\cdot\langle \epsilon,w \rangle}_{\text{Deviator}},
	\end{equation}
	where $\vt$ is an $m$-vector, $w$ is a scalar, and $\langle \epsilon,w\rangle$ is simply scalar multiplication.	
\end{defn}

The Critic in the minimal model is standard. However, the Deviator has been reduced to almost nothing: it learns a single scalar parameter, $w$, that is used to train the actor. The minimal model is thus too simple to be much use as a standalone algorithm. 

\begin{lem}[compatible function approximation for the minimal model]\label{lem:loc_compat}\eod	
	There exists a reparametrization of the gradient estimate of the minimal model $G^{\tilde{\wt}}(\sta,\epsilon) = G^w(\mu_\theta(\sta),\epsilon)$ such that compatibility condition \emph{C1} in Theorem~\ref{t:compat} is satisifed:
	\begin{equation}
		\grad_\epsilon G^{\tilde{\wt}}(\sta,\epsilon) 
		= \langle\grad_\theta \mu_\theta(\sta),\tilde{\wt}\rangle.
	\end{equation}
\end{lem}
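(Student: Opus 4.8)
The statement is an algebraic identity — it asserts the \emph{existence} of a reparametrization — so the plan is to write one down explicitly and then check the two things the lemma demands: that $G^{\tilde\wt}$ is genuinely the same function as the Deviator output $G^w(\mu_\theta(\sta),\epsilon)$, and that its $\epsilon$-gradient has the compatible form $\langle\grad_\theta\mu_\theta(\sta),\tilde\wt\rangle$.

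First I would unwind the minimal model. The deterministic policy is $\mu_\theta(\sta)=\langle\theta,\bphi(\sta)\rangle$, so $\grad_\theta\mu_\theta(\sta)=\bphi(\sta)$ (an $m$-vector), and the Deviator contributes $G^w(\mu_\theta(\sta),\epsilon)=\mu_\theta(\sta)\cdot\langle\epsilon,w\rangle = w\,\epsilon\,\langle\theta,\bphi(\sta)\rangle$, with $\langle\epsilon,w\rangle$ scalar multiplication. Differentiating in the scalar $\epsilon$ gives $\grad_\epsilon G^w(\mu_\theta(\sta),\epsilon)=w\langle\theta,\bphi(\sta)\rangle=\langle w\theta,\bphi(\sta)\rangle$, which dictates the reparametrization: set $\tilde\wt:=w\cdot\theta\in\bR^m$ and let $G^{\tilde\wt}(\sta,\epsilon):=\langle\tilde\wt,\bphi(\sta)\rangle\cdot\epsilon$. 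Then $G^{\tilde\wt}(\sta,\epsilon)=w\langle\theta,\bphi(\sta)\rangle\epsilon=G^w(\mu_\theta(\sta),\epsilon)$ for all $\sta,\epsilon$, and $\grad_\epsilon G^{\tilde\wt}(\sta,\epsilon)=\langle\tilde\wt,\bphi(\sta)\rangle=\langle\grad_\theta\mu_\theta(\sta),\tilde\wt\rangle$, which is exactly the displayed identity.

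I do not expect a genuine analytic obstacle; the points that need care are conceptual. The reparametrization couples a Deviator parameter $w$ with the Actor parameter $\theta$, so $\tilde\wt$ is not a quantity the Deviator owns in isolation — the identity holds instantaneously, at whatever $(\theta,w)$ the algorithm currently carries, which is the form Theorem~\ref{t:compat} uses. One should also record that $\mu_\theta(\sta)$ inside the Deviator denotes the deterministic output $\langle\theta,\bphi(\sta)\rangle$, consistent with Definition~\ref{def:beh_crit}, where $\Qt^\Wt$ is evaluated at $\tilde\sta=(\sta,\bmu_\Theta(\sta))$; replacing it by the noisy action would contribute an extra $w\epsilon^2$ term whose $\epsilon$-derivative vanishes at $\epsilon=0$, leaving the conclusion at the operating point unchanged. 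Finally, to connect with the surrounding discussion I would note that this structural identity is what makes the Deviator estimate plug into the policy gradient without bias; combined with Lemma~\ref{lem:gradl} and Theorem~\ref{thm:extension}, which ensure the Deviator actually learns the true value gradient given a good representation, it yields the full compatibility claim for the minimal model, and — in the next subsection — for each Actor-unit.
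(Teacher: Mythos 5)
Your proposal is correct and matches the paper's proof essentially verbatim: the same reparametrization $\tilde{\wt} = w\cdot\theta$ with $G^{\tilde{\wt}}(\sta,\epsilon) = \langle\tilde{\wt}\cdot\bphi(\sta),\epsilon\rangle$, followed by the same direct check that both $\grad_\epsilon G^{\tilde{\wt}}$ and $\langle\grad_\theta\mu_\theta(\sta),\tilde{\wt}\rangle$ equal $w\cdot\mu_\theta(\sta)$. Your additional remarks on the coupling of $w$ with $\theta$ and on the deterministic operating point are sensible clarifications but not needed for the proof itself.
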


\begin{proof}	
	Let $\tilde{\wt} := w\cdot \theta^\intercal$ and construct  $G^{\tilde{\wt}}(\sta,\epsilon):= \langle\tilde{\wt}\cdot \bphi(\sta),\epsilon\rangle$.
	Clearly,
	\begin{equation}
		G^{\tilde{\wt}}(\sta,\epsilon)
		= \langle w\cdot \theta^\intercal\cdot \bphi(\sta), \epsilon\rangle 
		=\mu_\theta(\sta)\cdot\langle w,\epsilon \rangle
		= G^{w}(\mu_\theta(\sta),\epsilon).
	\end{equation}
	Observe that $\grad_\epsilon G^{\tilde{\wt}}(\sta,\epsilon) = w\cdot \mu_\theta(\sta)$ and that, similarly,  
	\begin{equation}
		\big\langle \grad_\theta \mu_\theta(\sta), \tilde{\wt}\big\rangle 
		= w\cdot\mu_\theta(\sta) 
	\end{equation}
	as required.
\end{proof}

\subsection{Proof of Theorem~\ref{thm:main}}

The proof proceeds by showing that the compatibility conditions in Theorem~\ref{t:compat} hold for each Actor-unit. The key step is to relate the Actor-units to the minimal model introduced above.

\begin{lem}[reduction to minimal model]\label{lem:reduction}\eod
	Actor-units in a DAC neural network are equivalent to minimal model Actors.
\end{lem}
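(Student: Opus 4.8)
The plan is to take an arbitrary Actor-unit $j$ in the Actor-network and show that, when restricted to the active submodel at a given time step, its situation is exactly an instance of the minimal model, so that Lemma~\ref{lem:loc_compat} applies. First I would invoke Remark~\ref{rem:active} to fix the active submodel at time $t$: all rectifiers that fire are linear, so the Actor-network restricted to active units is a linear map from its inputs to the policy output $\bmu_\Theta(\sta)$, and only active units update their weights. Then I would apply Lemma~\ref{lem:structure}(b), the output decomposition, to the Actor-network relative to unit $j$: writing $\bmu_\Theta(\sta) = \bpi^{j}\cdot x^j + \bpi^{-j}\cdot\bphi(\sta)$, where $x^j = \langle\wt^j,\bphi^j(\sta)\rangle$ is the (active, hence linear) output of unit $j$ and $\bphi^j(\sta)$ is its vector of inputs. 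The term $\bpi^{-j}\cdot\bphi(\sta)$ does not depend on $\wt^j$, so from unit $j$'s point of view the policy is $x^j$ scaled by its influence $\bpi^j$ plus a constant — precisely the form $\mu_\theta(\sta) = \langle\theta,\bphi(\sta)\rangle + (\text{const})$ of the minimal-model Actor, with $\theta \leftrightarrow \wt^j$ and the feature map $\bphi(\cdot) \leftrightarrow \bphi^j(\cdot)$.

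Next I would do the same bookkeeping on the Deviator side. The Deviator-network, restricted to its own active submodel, is likewise linear, so its output $\Qt^\Wt(\tilde\sta)$ is a linear function of the active Deviator weights and its input features; combined with the noise $\bepsilon$, the contribution $\langle\Qt^\Wt(\tilde\sta),\bepsilon\rangle$ that is actually backpropagated into unit $j$ (via $\grad_\Theta\bmu_\Theta(\sta)\cdot\Qt^\Wt(\tilde\sta)$, per Algorithm~\ref{alg:qprop}) is what unit $j$ sees as its critic-plus-deviator signal. Using Lemma~\ref{lem:structure}(c)/(d), the gradient update received by unit $j$'s weights $\wt^j$ is $\bphi^j(\sta)$ times the scalar obtained by contracting the influence $\bpi^j$ against the Deviator output — so the effective ``scalar Deviator weight'' $w$ of the minimal model is $\langle\bpi^j,\Qt^\Wt(\tilde\sta)\rangle$ (more precisely the relevant projection along the noise direction), and the Critic term $Q^\Vt(\tilde\sta)$ plays the role of the minimal model's baseline $\langle\bphi(\sta),\vt\rangle$. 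Assembling these identifications shows that the triple (unit $j$ as Actor, the projected Deviator scalar, the Critic baseline) is literally a minimal DAC model in the sense of Definition~\ref{def:beh_crit}'s minimal version. The statement of Lemma~\ref{lem:reduction} — ``Actor-units in a DAC neural network are equivalent to minimal model Actors'' — is then exactly this correspondence, and Theorem~\ref{thm:main} will follow by applying Lemma~\ref{lem:loc_compat} (plus Lemma~\ref{lem:gradl}/Theorem~\ref{thm:extension} for condition C1) to each unit.

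I would organize the write-up as: (1) fix $t$ and the active submodels of Actor and Deviator; (2) apply the output decomposition to isolate $\wt^j$'s contribution to the policy; (3) read off the Deviator signal backpropagated to unit $j$ and identify it with a scalar deviator weight; (4) match all four objects — Actor policy, Actor features, Deviator scalar, Critic baseline — against the minimal model's definition, concluding equivalence.

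The main obstacle, I expect, is step (3): carefully tracking what ``equivalent'' means across the nonlinear gating. The active submodel changes from step to step, so the reparametrization $\tilde\wt$ (as in Lemma~\ref{lem:loc_compat}, where $\tilde\wt = w\cdot\theta^\intercal$) is time-dependent and depends on the current influences $\bpi^j$, which themselves depend on the weights of other units. One must be careful that the compatibility identity $\grad_\epsilon G^{\tilde\wt} = \langle\grad_\theta\mu_\theta,\tilde\wt\rangle$ is being checked pointwise on each active submodel, not globally, and that the noise $\bepsilon$ enters the Actor-unit's update only through the Deviator (as emphasized in Remark~\ref{rem:stability}), so that the scalar $w$ seen by unit $j$ is well-defined. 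Handling the interaction between the Actor's active submodel and the Deviator's active submodel cleanly — rather than getting lost in indices — is where the real care is needed; the algebra itself, once the right objects are identified, is the routine verification already carried out in Lemma~\ref{lem:loc_compat}.
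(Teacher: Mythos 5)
Your Actor-side bookkeeping matches the paper exactly: fix the active submodel (Remark on active submodels) and use the output decomposition of Lemma on the structure of neural network gradients to write $\bmu_\Theta(\sta)=\bpi^j\cdot\langle\theta^j,\bphi^j(\sta)\rangle+\bmu_{\Theta^{-j}}(\sta)$, so that unit $j$ looks like a minimal-model Actor with parameter $\theta^j$ and features $\bphi^j(\sta)$. The gap is in your step (3), on the Deviator side. You identify the minimal model's scalar deviator weight $w$ with $\langle\bpi^j,\Qt^\Wt(\tilde{\sta})\rangle$, i.e.\ with the contraction of unit $j$'s influence against the \emph{value} of the Deviator's output — which is the signal appearing in unit $j$'s weight update (and is an estimate of the directional derivative $D_{\bpi^j}Q^{\bmu}$), but it is not the parameter of a minimal model. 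In the minimal model, $w$ is the coefficient through which the Actor's output enters the gradient estimate, $G^w(\mu_\theta(\sta),\epsilon)=\mu_\theta(\sta)\cdot\langle\epsilon,w\rangle$; matching the update formula alone does not exhibit the value-gradient approximator in this form, and hence does not license the reparametrization $\tilde{\wt}=w\cdot\theta^\intercal$ that Lemma on compatibility for the minimal model (and thus condition C1 in Theorem on compatible approximation) requires.

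The missing move, which is the heart of the paper's proof, is a composition of the two active linear maps: since the Deviator's active submodel is a linear transform $\Wt'_t$ of its input, plugging the Actor decomposition into it gives
\begin{equation}
\Qt^{\Wt_t}(\tilde{\sta}_t)=\Wt'_t\cdot\Big(\bpi^j_t\cdot\langle\theta^j,\bphi^j_t(\sta_t)\rangle+\bmu_{\Theta^{-j}_t}(\sta_t)\Big)=(\Wt'_t\cdot\bpi^j_t)\cdot\langle\theta^j,\bphi^j_t(\sta_t)\rangle+\text{terms not involving }\theta^j,
\end{equation}
so the coefficient seen by unit $j$ is the $d$-vector $\Wt'_t\cdot\bpi^j_t$ (the derivative of the Deviator output with respect to unit $j$'s activity), and unit $j$'s interaction with the Deviator is $d$ copies of the minimal model, one per noise coordinate, with scalar weights $(\Wt'_t\cdot\bpi^j_t)_k$. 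With that one-line composition in place your outline goes through; your cautionary remarks about the time-dependence of the active submodels and of the reparametrization are apt, but they do not substitute for identifying the correct object to play the role of $w$.
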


\begin{proof}
	Let $\bpi^j_t$ denote the influence of unit $j$ on the output layer of the Actor-network at time $t$. When unit $j$ is active, Lemma~\ref{lem:structure}ab implies we can write $\bmu_{\Theta_t}(\sta_t)=\bpi^j_t\cdot \langle\theta^j_t,\bphi^j_t(\sta_t)\rangle + \bmu_{\Theta^{-j}_t}(\sta_t)$, where $\bmu_{\Theta^{-j}_t}(\sta_t)$ is the sum over all active paths from the input to the output of the Actor-network that do not intersect unit $j$. 

	Following Remark~\ref{rem:active}, the active subnetwork of the Deviator-network at time $t$ is a linear transform which, by abuse of notation, we denote by $\Wt'_t$. 
	
	Combine the last two points to obtain
	\begin{align}
		\Qt^{\Wt_t}(\tilde{\sta}_t) & = \Wt'_t\cdot \left(\bpi^j_t\cdot \langle\theta^j,\bphi^j_t(\sta_t)\rangle + \bmu_{\Theta^{-j}_t}(\sta_t)\right) \\
		& = (\Wt'_t\cdot \bpi^j_t)\cdot \langle\theta^j,\bphi^j_t(\sta_t)\rangle + \text{terms that can be omitted}.
	\end{align}
	Observe that $(\Wt'_t\cdot \bpi^j_t)$ is a $d$-vector. We have therefore reduced Actor-unit $j$'s interaction with the Deviator-network to $d$ copies of the minimal model.
\end{proof}

\noindent
Theorem~\ref{thm:main} follows from combining the above Lemmas.

\vspace{2mm}
\begin{proof}
	Compatibility condition \emph{C1} follows from Lemmas~\ref{lem:loc_compat} and \ref{lem:reduction}.	
	Compatibility condition \emph{C2} holds since the Critic and Deviator minimize the Bellman gradient error with respect to $\Wt$ and $\Vt$ which also, implicitly, minimizes the Bellman gradient error with respect to the corresponding reparametrized $\tilde{\wt}$'s for each Actor-unit. 
\end{proof}

Theorem~\ref{thm:main} shows that each Actor-unit satisfies the conditions for compatible function approximation and so follows the correct gradient when performing weight updates. 


\subsection{Structural credit assignment for multiagent learning}
\label{sec:local_actors}

It is interesting to relate our approach to the literature on multiagent reinforcement learning \citep{guestrin:02,agogino:04,agogino:08}. In particular, \citep{holmesparker:14} consider the \emph{structural credit assignment problem} within populations of interacting agents: How to reward individual agents in a population for rewards based on their collective behavior? They propose to train agents within populations with a \emph{difference-based objective} of the form 
\begin{equation}
	\label{eq:tumer_diff}
	D_j = Q(\z) - Q(\z_{-j}, {\mathbf c}_j)
\end{equation}
where $Q$ is the objective function to be maximized; $\z_j$ and $\z_{-j}$ are the system variables that are and are not under the control of agent $j$ respective, and ${\mathbf c}_j$ is a fixed counterfactual action.

In our setting, the gradient used by Actor-unit $j$ to update its weights can be described explicitly:

\begin{lem}[local policy gradients]\eod
	Actor-unit $j$ follows policy gradient 
	\begin{equation}
		\grad J[\bmu_{\theta^j}] = \expec\left[\grad_{\theta^j} \bmu_{\theta^j}(s)\cdot \big\langle \bpi^j,\Qt^\Wt(\tilde{\sta})\big\rangle\right],
	\end{equation}		
	where $\langle \bpi^j,\Qt^\Wt(\tilde{\sta})\big\rangle\approx D_{\bpi^j}Q^{\bmu}(\tilde{\sta})$ is Deviator's estimate of the directional derivative of the value function in the direction of Actor-unit $j$'s influence.
\end{lem}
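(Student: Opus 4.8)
The plan is to read the claimed identity as a chain-rule decomposition of the ordinary deterministic policy gradient (Theorem~\ref{t:dpg}) applied to Actor-unit $j$ — viewed as an actor whose one-dimensional ``action'' is its output $x^j = \bmu_{\theta^j}(\sta)$ — and then to check that the $\Theta$-update in Algorithm~\ref{alg:qprop}, restricted to the weights $\theta^j$ of unit $j$, is precisely a stochastic step along that gradient. The two halves use, respectively, the output decomposition (Lemma~\ref{lem:structure}b) together with the reduction to the minimal model (Lemmas~\ref{lem:reduction} and \ref{lem:loc_compat}, i.e. Theorem~\ref{thm:main}), and the output-gradient formula (Lemma~\ref{lem:structure}c).

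First I would unpack the Actor update. By Lemma~\ref{lem:structure}c, on a time step where unit $j$ is active the $(i,j)$-block of $\grad_\Theta\bmu_{\Theta_t}(\sta_t)$ equals $\phi^{ij}(\sta_t)\cdot\bpi^j_t$, and vanishes otherwise; and since $\bmu_{\theta^j}(\sta)=S(\langle\theta^j,\bphi^j(\sta)\rangle)$ is linear when active, $\grad_{\theta^j}\bmu_{\theta^j}(\sta_t)=\bphi^j(\sta_t)$ there. Hence the part of $\Theta_{t+1}\leftarrow\Theta_t+\eta^A_t\,\grad_\Theta\bmu_{\Theta_t}(\sta_t)\cdot\Qt^{\Wt_t}(\tilde{\sta}_t)$ acting on $\theta^j$ is $\eta^A_t\,\bphi^j(\sta_t)\,\langle\bpi^j_t,\Qt^{\Wt_t}(\tilde{\sta}_t)\rangle = \eta^A_t\,\grad_{\theta^j}\bmu_{\theta^j}(\sta_t)\,\langle\bpi^j_t,\Qt^{\Wt_t}(\tilde{\sta}_t)\rangle$, i.e. a stochastic-gradient step on $\expec[\grad_{\theta^j}\bmu_{\theta^j}(\sta)\,\langle\bpi^j,\Qt^\Wt(\tilde{\sta})\rangle]$, which is the right-hand side of the claimed formula. (Remark~\ref{rem:active} justifies ignoring the inactive steps: only the active submodel learns, and it is linear in its inputs.)

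Next I would identify this with the true policy gradient of unit $j$. By the output decomposition (Lemma~\ref{lem:structure}b), $\bmu_{\Theta}(\sta)=\bpi^j\cdot x^j+\bpi^{-j}\cdot\x_{\text{in}}$, so $\partial\bmu_\Theta/\partial x^j=\bpi^j$; perturbing $x^j$ while holding the policy fixed therefore changes $Q^{\bmu}$ at rate $\langle\bpi^j,\grad_\act Q^{\bmu}(\sta,\act)_{|\act=\bmu_\theta(\sta)}\rangle = D_{\bpi^j}Q^{\bmu}(\tilde{\sta})$. Applying Theorem~\ref{t:dpg} to the scalar actor $x^j=\bmu_{\theta^j}(\sta)$ and the chain rule through $\bmu_\Theta$ gives
\[
	\grad J[\bmu_{\theta^j}] = \expec\!\left[\grad_{\theta^j}\bmu_{\theta^j}(\sta)\cdot\big\langle\bpi^j,\grad_\act Q^{\bmu}(\sta,\act)_{|\act=\bmu_\theta(\sta)}\big\rangle\right] = \expec\!\left[\grad_{\theta^j}\bmu_{\theta^j}(\sta)\cdot D_{\bpi^j}Q^{\bmu}(\tilde{\sta})\right].
\]
Finally, Lemma~\ref{lem:reduction} reduces Actor-unit $j$'s interaction with the Deviator to $d$ copies of the minimal model, so Lemma~\ref{lem:loc_compat} and Theorem~\ref{thm:main} give that $\Qt^\Wt(\tilde{\sta})$ is a compatible estimate of $\grad_\act Q^{\bmu}(\sta,\act)_{|\act=\bmu_\theta(\sta)}$; substituting it into the display yields $\langle\bpi^j,\Qt^\Wt(\tilde{\sta})\rangle\approx D_{\bpi^j}Q^{\bmu}(\tilde{\sta})$ and matches the update from the first step, giving the stated formula.

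The main obstacle is bookkeeping rather than deep content: one must verify that the quantity backpropagated by the algorithm into unit $j$ is exactly $\partial\bmu_\Theta/\partial x^j=\bpi^j$ (this is where Lemma~\ref{lem:structure}bc does the work), and that conditioning on the active submodel (Remark~\ref{rem:active}) does not bias the expectation — the correct object is $\expec[\cdots\mid\text{unit }j\text{ active}]$ weighted by unit $j$'s firing probability, which is precisely the policy-gradient expectation restricted to states where unit $j$ contributes to the output. One should also flag that the ``$\approx$'' in the statement inherits exactly Deviator's approximation error: the $\sigma^2\to0$ limit of Lemma~\ref{lem:gradl} and the TD-to-TDG transfer of Theorem~\ref{thm:extension}, so the identity is exact only in that limit.
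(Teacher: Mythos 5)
Your proposal is correct and follows essentially the same route as the paper, whose entire proof is the appeal to the output decomposition (Lemma~\ref{lem:structure}b) that forms the core of your second step: $\partial\bmu_\Theta/\partial x^j=\bpi^j$, hence the unit-level policy gradient picks up the directional derivative $D_{\bpi^j}Q^{\bmu}(\tilde{\sta})$, estimated by $\langle\bpi^j,\Qt^\Wt(\tilde{\sta})\rangle$. Your additional bookkeeping (matching the $\theta^j$-block of the Algorithm~\ref{alg:qprop} update via Lemma~\ref{lem:structure}c, and flagging that the ``$\approx$'' inherits Deviator's approximation error) is consistent with the paper's appendix and surrounding results, just spelled out more explicitly than the paper bothers to.
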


\begin{proof}
	Follows from Lemma~\ref{lem:structure}b.
\end{proof}

Notice that $\grad_{\z_j} Q= \grad_{\z_j} D_j$ in Eq.~\eqref{eq:tumer_diff}. It follows that training the Actor-network via $\qprop$ causes the Actor-units to optimize the difference-based objective -- without requiring to compute the difference explicitly. Although the topic is beyond the scope of the current paper, it is worth exploring how suitably adapted variants of backpropagation can be applied to the reinforcement learning problems in the multiagent setting.

\subsection{Comparison with related work}
\label{sec:problem}

\paragraph{Comparison with $\copdac$.}
Extending the standard value function approximation in Example~\ref{eg:advantage} to the setting where Actor is a neural network yields the following representation, which is used in \citep{silver:14} when applying $\copdac$ to the octopus arm task:

\begin{eg}[extension of standard value approximation to neural networks]\label{eg:deep_advantage}\eod
	Let $\bmu_\Theta:\cS\rightarrow \cA$ and $Q^\Vt:\cS\rightarrow\bR$ be an Actor and Critic neural network respectively. Suppose the Actor-network has $N$ parameters (i.e. the total number of entries in $\Theta$). It follows that the Jacobian $\grad_\Theta\bmu_\Theta(\sta)$ is an $(N\times d)$-matrix. 

	The value function approximation is then
	\begin{equation}
		Q^{\Vt,\Wt}(\sta,\act) 
		= \underbrace{(\act-\bmu_\Theta(\sta))^\intercal \cdot\nabla_\Theta\bmu_\Theta(\sta)^\intercal\cdot \wt}_{\text{advantage function}}  + \underbrace{Q^{\Vt}(\sta)}_{\text{Critic}}.
	\end{equation}
	where $\wt$ is an $N$-vector.
\end{eg}

Weight updates under $\copdac$, with the function approximation above, are therefore as described in Algorithm~\ref{alg:copdac}.
\vspace{2mm}

\begin{algorithm}[H]
	\caption{\texttt{Compatible Deterministic Actor-Critic ($\copdac$)}.\label{alg:copdac}}
	\DontPrintSemicolon
	\SetKwInOut{Input}{input}
 	\For{rounds $t =1, 2, \ldots, T$}{
		Network gets state $\sta_t$, responds $\act_t=\bmu_{\Theta_t}(\sta_t)+\bepsilon$ where $\bepsilon\sim N(\bZe,\sigma^2\cdot \bI_d)$, gets reward $r_t$
		\;
		$\delta_t \longleftarrow r_t  
			+ \gamma Q^{\Vt_t}(\sta_{t+1})
			- Q^{\Vt_t}(\sta_t) - \langle \grad_\Theta \bmu_{\Theta_t}(\sta_t)\cdot \bepsilon,\wt_t\rangle$
		\;
		$\Theta_{t+1} 
			\longleftarrow \Theta_{t} 
			+ \eta^A_t \cdot\grad_\Theta\bmu_{\Theta_t}(\sta_t)\cdot \grad_\Theta\bmu_{\Theta_t}(\sta_t)^\intercal\cdot \wt_t$
		\;
		$\Vt_{t+1} 
			\longleftarrow \Vt_t+ \eta^C_t \cdot\delta_t\cdot\grad_\Vt Q^{\Vt_t}(\sta_t)$
		\;
		$\wt_{t+1} 
			\longleftarrow \wt_t
			+ \eta^C_t \cdot\delta_t\cdot \grad_\Theta\bmu_{\Theta_t}(\sta_t)\cdot \bepsilon$
		\;
	}
\end{algorithm}
\vspace{2mm}

\noindent
Let us compare $\qprop$ with $\copdac$, considering the three updates in turn:
\begin{itemize}
	\item \emph{Actor updates.}\\
	Under $\qprop$, the  Actor backpropagates the value-gradient estimate. In contrast under $\copdac$ the Actor performs a complicated update that combines the policy gradient $\nabla_\Theta\bmu(\sta)$ with the advantage function's weights -- and differs substantively from backprop. 

	\item \emph{Deviator / advantage-function updates.}\\
	Under $\qprop$, the Deviator backpropagates the perturbed TDG-error. In contrast,  $\copdac$ uses the gradient of the \emph{Actor} to update the weight vector $\wt$ of the advantage function.

	By Lemma~\ref{lem:structure}d, backprop takes the form $\subg^\intercal \cdot \grad_\Theta\bmu_\Theta(\sta)$ where $\subg$ is a $d$-vector. In contrast, the advantage function requires computing $\grad_\Theta\bmu_\Theta(\sta)^\intercal\cdot\wt$, where $\wt$ is an $N$-vector. Although the two formulae appear similarly superficially, they carry very different computational costs. 

	The first consequence is that the parameters of $\wt$ must exactly line up with those of the policy. The second consequence is that, by Lemma~\ref{lem:structure}c, the advantage function requires access to
	\begin{equation}
		\left(\nabla_\Theta \bmu_\Theta(\sta)\right)_{ij} = \begin{cases}
			\bphi^{ij}(\sta)\cdot \bpi^j & \text{if unit $j$ is active}\\
			0 & \text{else},
		\end{cases}
	\end{equation}
	where $\bphi^{ij}(\sta)$ is the input from unit $i$ to unit $j$. Thus, the advantage function  requires access to the input $\bphi^j(\sta)$ and the influence $\bpi^{j}$ of every unit in the Actor-network.
	
	\item \emph{Critic updates.}\\
	The critic updates for the two algorithms are essentially identical, with the TD-error replaced with the TDG-error.
\end{itemize}

In short, the approximation in Example~\ref{eg:deep_advantage} that is used by $\copdac$ is thus not well-adapted to deep learning. The main reason is that learning the advantage function requires coupling the vector $\wt$ with the parameters $\Theta$ of the actor.

\paragraph{Comparison with computing the gradient of the value-function approximation.}
Perhaps the most natural approach to estimating the gradient is to simply estimate the value function, and then use its gradient as an estimate of the derivative \citep{jordan:90,prokhorov:97,wang:01,hafner:11,fairbank:12,fairbank:13}. The main problem with this approach is that, to date, it has not been show that the resulting updates of the Critic and the Actor are compatible. 

There are also no guarantees that the gradient of the Critic will be a good approximation to the gradient of the value function -- although it is intuitively plausible. The problem becomes particularly severe when the value-function is estimated via a neural network that uses activation functions that are \emph{not smooth} such as rectifers. Rectifiers are becoming increasingly popular due to their superior empirical performance \citep{nair:10,glorot:11,zeiler:13,dahl:13}.

\section{Experiments}
\label{sec:experiments}

We evaluate $\qprop$ on three tasks: two highly nonlinear contextual bandit tasks constructed from benchmark datasets for nonparametric regression, and the octopus arm. 

We do not evaluate $\qprop$ on other standard reinforcement learning benchmarks such as Mountain Car, Pendulum or Puddle World, since these can already be handled by \emph{linear} actor-critic algorithms. The contribution of $\qprop$ is the ability to learn representations suited to nonlinear problems.

\paragraph{Cloning and replay.}
Temporal difference learning can be unstable when run over a neural network. A recent innovation introduced in \citep{Mnih:2015wq} that \emph{stabilizes} TD-learning is to clone a separate network $Q^{\tilde{\Vt}}$ to compute the targets $r_t + \gamma Q^{\tilde{\Vt}}(\tilde{\sta}_{t+1})$. The parameters of the cloned network are updated periodically. 

We implement a similar modification of the TDG-error in Algorithm~\ref{alg:qprop}. We also use experience replay \citep{Mnih:2015wq}. $\qprop$ is well-suited to replay, since the critic and deviator can learn values and gradients over the full range of previously observed state-action pairs offline. 

Cloning and replay were also applied to $\copdac$. Both algorithms were implemented in Theano \citep{bergstra:10,bastien:12}.

\subsection{Contextual Bandit Tasks}
\label{sec:cb}

\begin{figure}[!htb]
	\centering
	\begin{center}
		\subfigure{\includegraphics[width=2.85in]{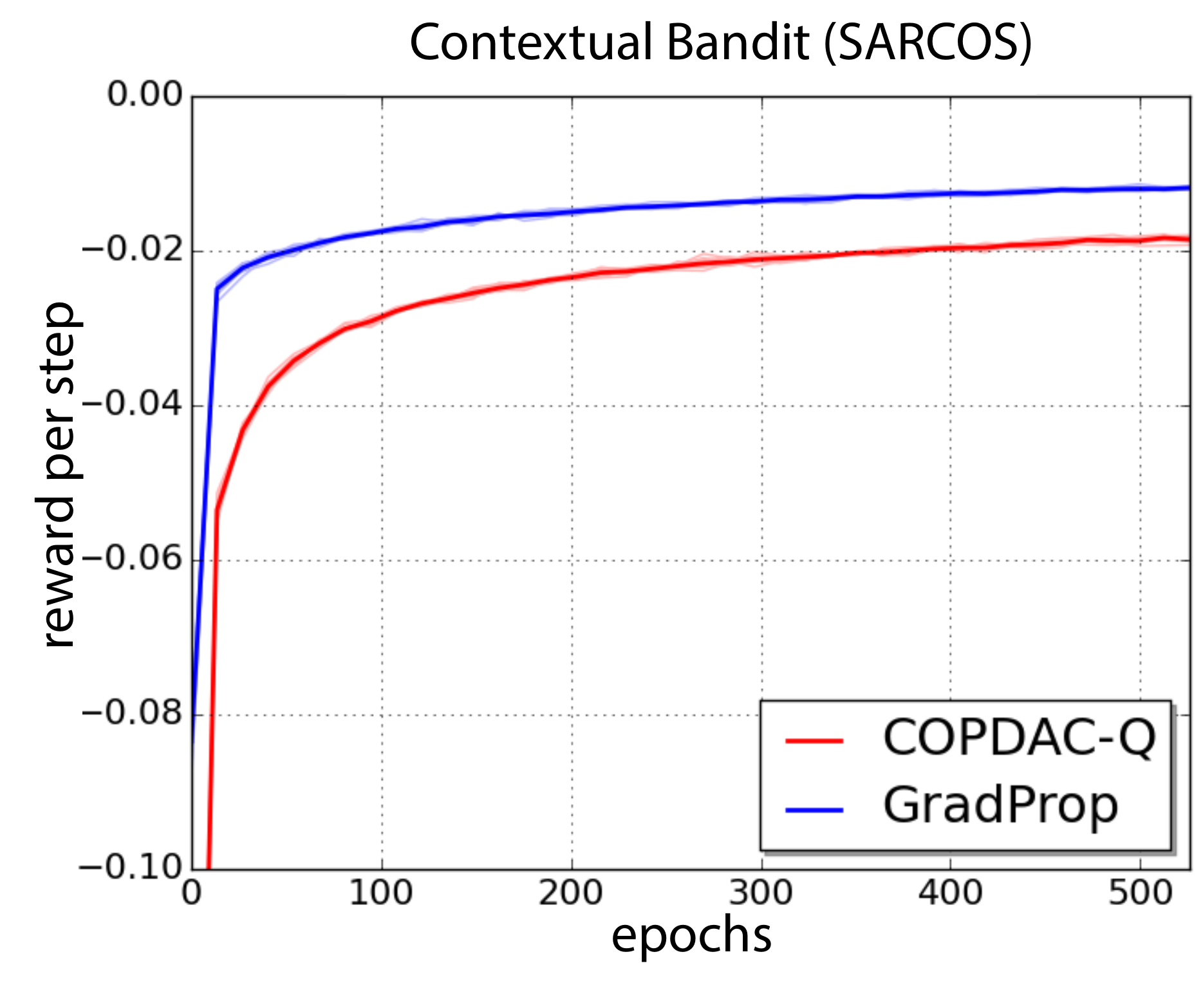}}
		\hspace{5mm}
		\subfigure{\includegraphics[width=2.85in]{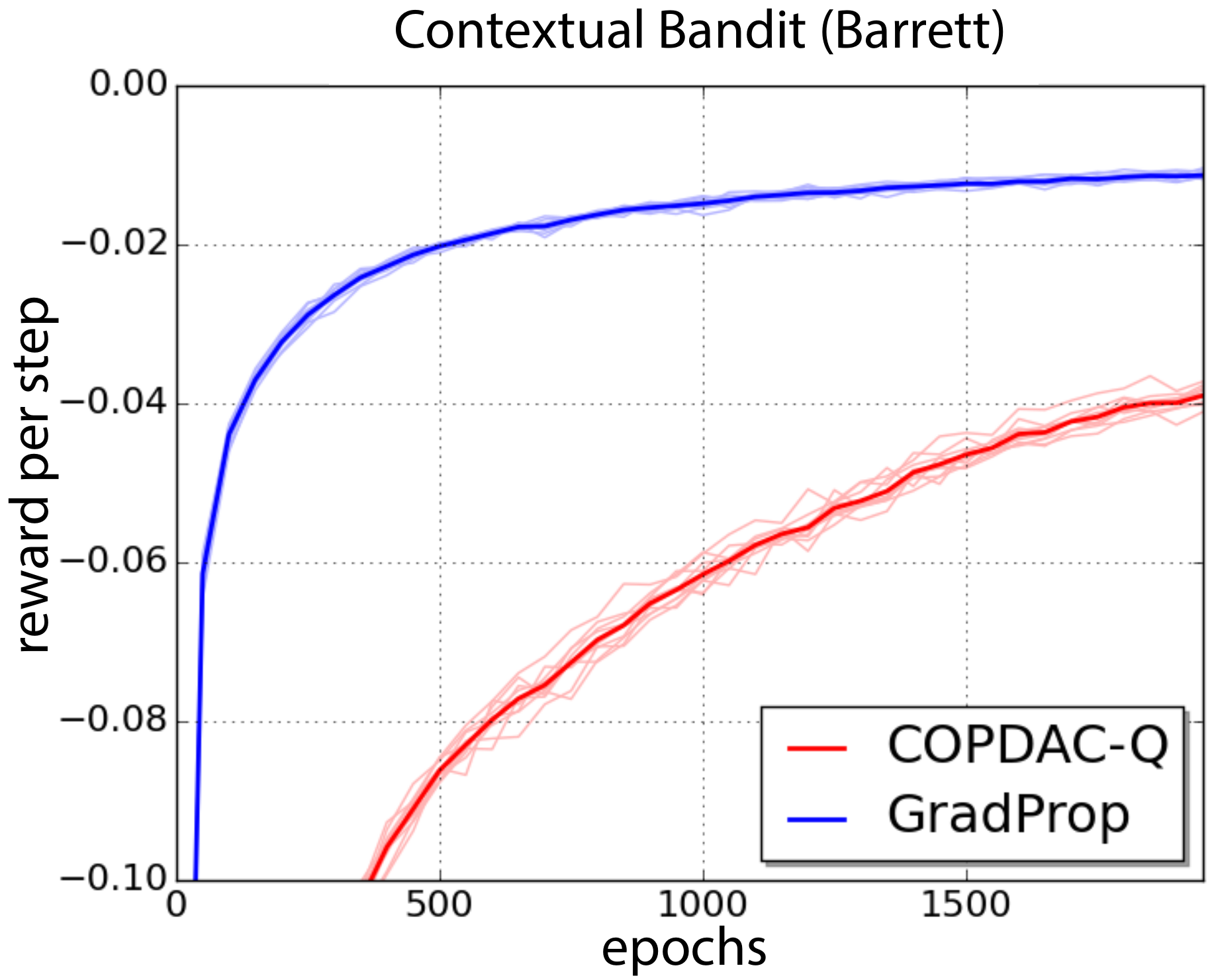}}
	\end{center}
	\vspace{-3mm}
	\caption{\textbf{Performance on contextual bandit tasks.}
	The reward (negative normalized test MSE) for 10 runs are shown and averaged (thick lines). Performance variation for $\qprop$ is barely visible. Epochs refer to multiples of dataset; algorithms are ultimately trained on the same number of random samples for both datasets.
	}
	\label{f:robo}
\end{figure}

The goal of the contextual bandit tasks is to probe the ability of reinforcement learning algorithms to accurately estimate gradients. The experimental setting may thus be of independent interest.

\paragraph{Description.}
We converted two robotics datasets, SARCOS\footnote{Taken from \texttt{www.gaussianprocess.org/gpml/data/}.} and Barrett WAM\footnote{Taken from \texttt{http://www.ausy.tu-darmstadt.de/Miscellaneous/Miscellaneous}.}, into contextual bandit problems via the supervised-to-contextual-bandit transform in \citep{dudik:14}. The datasets have 44,484 and 12,000 training points respectively, both with 21 features corresponding to the positions, velocities and accelerations of seven joints. Labels are 7-dimensional vectors corresponding to the torques of the 7 joints. 

In the contextual bandit task, the agent samples 21-dimensional state vectors i.i.d. from either the SARCOS or Barrett training data and executes 7-dimensional actions. The reward $r(\sta,\act)= -\|\y(\sta)-\act\|_2^2$ is the negative mean-square distance from the action to the label. Note that the reward is a scalar, whereas the correct label is a 7-dimensional vector. The gradient of the reward 
\begin{equation}
	\frac{1}{2}\grad_\act r(\sta,\act) = \y(\sta)-\act
\end{equation}
is the direction from the action to the correct label. In the supervised setting, the gradient can be computed. In the bandit setting, the reward is a zeroth-order black box.

The agent thus receives far less information in the bandit setting than in the fully supervised setting. Intuitively, the negative distance $r(\sta,\act)$ ``tells'' the algorithm that the correct label lies on the surface of a sphere in the 7-dimensional action space that is centred on the most recent action. By contrast, in the supervised setting, the algorithm is given the position of the label in the action space. In the bandit setting, the algorithm must estimate the position of the label on the surface of the sphere. Equivalently, the algorithm must estimate the label's direction relative to the center of the sphere -- which is given by the \emph{gradient} of the value function.

The goal of the contextual bandit task is thus to \emph{simultaneously solve seven nonparametric regression problems when observing distances-to-labels instead of directly observing labels}. The value function is relatively easy to learn in contextual bandit setting since the task is not sequential. However, both the value function and its gradient are highly nonlinear, and it is precisely the gradient that specifies where labels lie on the spheres.

\paragraph{Network architectures.}
$\qprop$ and $\copdac$ were implemented on an actor and deviator network of two layers (300 and 100 rectifiers) each and a critic with a hidden layers of 100 and 10 rectifiers. Updates were computed via RMSProp with momentum. The variance of the Gaussian noise $\sigma$ was set to decrease linearly from $\sigma^2=1.0$ until reaching $\sigma^2=0.1$ at which point it remained fixed.

\paragraph{Performance.}
Figure~\ref{f:robo} compares the test-set performance of policies learned by $\qprop$ against $\copdac$. The final policies trained by $\qprop$ achieved average mean-square test error of 0.013 and 0.014 on the seven SARCOS and Barrett benchmarks respectively. 

Remarkably, $\qprop$ is competitive with fully-supervised nonparametric regression algorithms on the SARCOS and Barrett datasets, see Figure~2bc in \citep{nguyen:08} and the results in \citep{kpotufe:13,trivedi:14}. It is important to note that the results reported in those papers are for algorithms that are given the labels and that solve \emph{one regression problem at a time}. To the best of our knowledge, there are no prior examples of a bandit or reinforcement learning algorithm that is competitive with fully supervised methods on regression datasets.

For comparison, we implemented Backprop on the Actor-network under full-supervision. Backprop converged to .006 and .005 on SARCOS and BARRETT, compared to 0.013 and 0.014 for $\qprop$. Note that BackProp is trained on 7-dim labels whereas $\qprop$ receives 1-dim rewards.

\begin{figure}[!htb]
	\centering
	\begin{center}
		\subfigure{\includegraphics[width=2.85in]{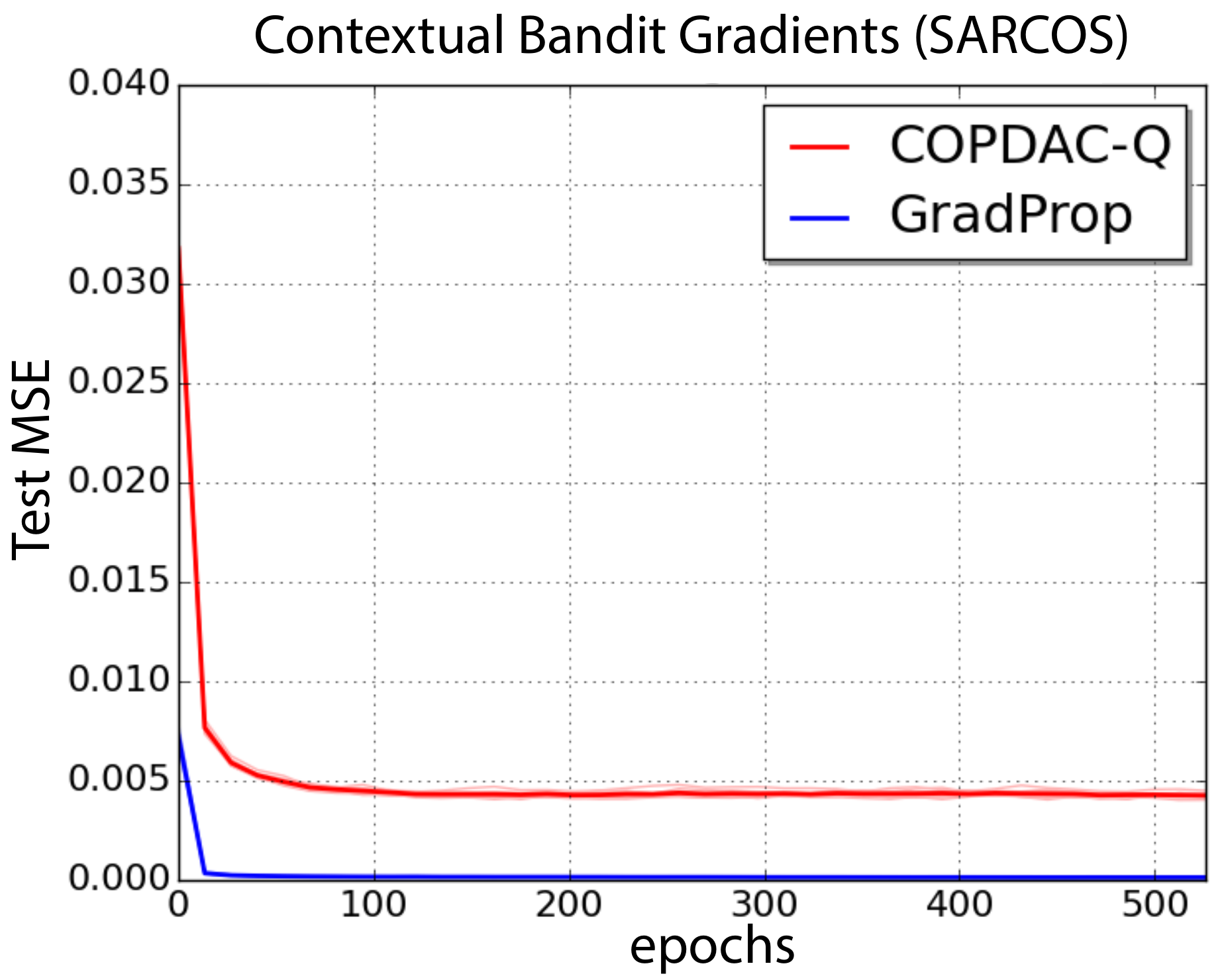}}
		\hspace{5mm}
		\subfigure{\includegraphics[width=2.85in]{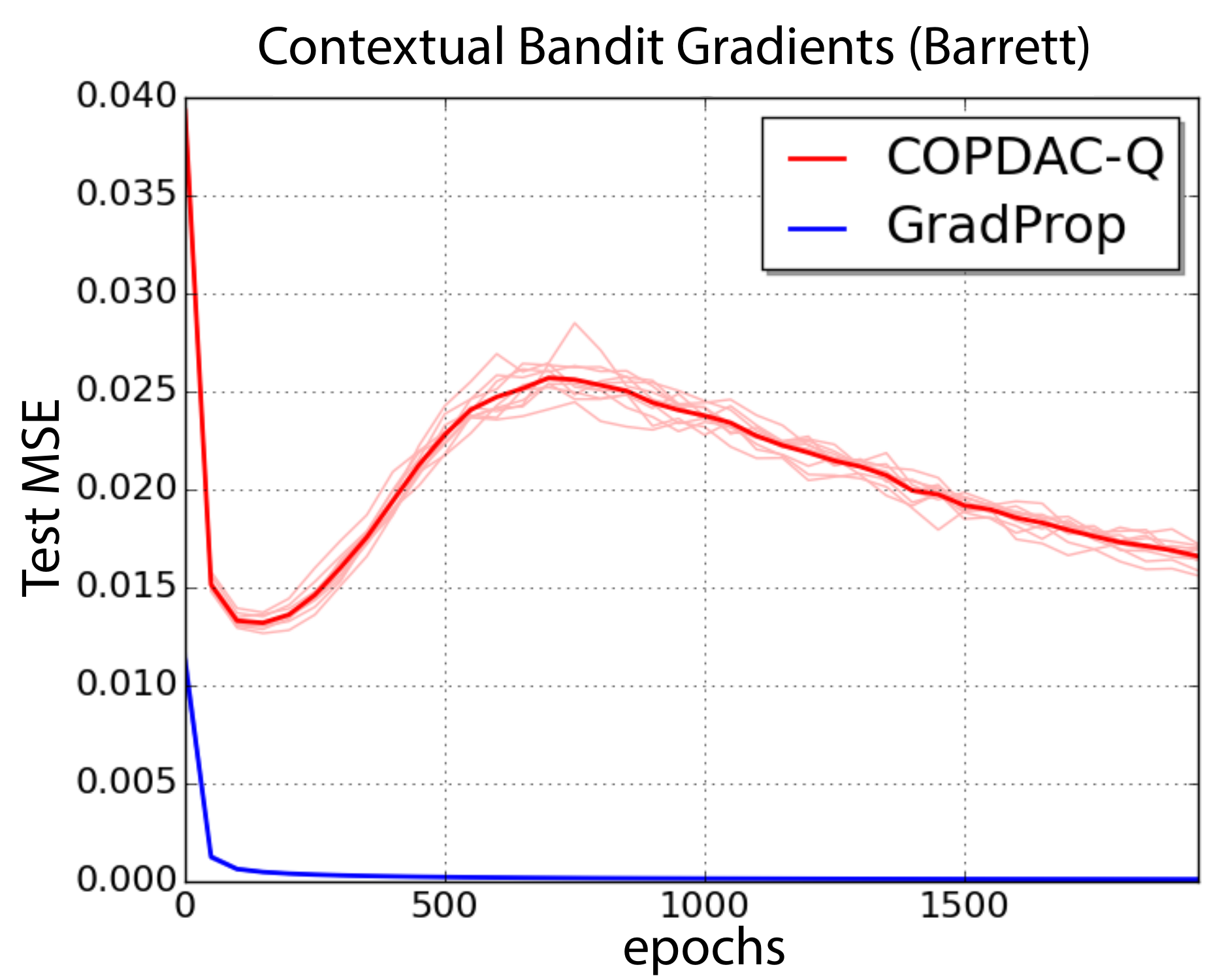}}
	\end{center}
	\vspace{-3mm}
	\caption{\textbf{Gradient estimates for contextual bandit tasks.}
	The normalized MSE of the gradient estimates compared against the true gradients, i.e. $\frac{1}{7}\|\grad_{est}-\grad_{true}\|_2^2$, are shown for 10 runs of $\copdac$ and $\qprop$, along with their averages (thick lines).
	}
	\label{f:robo_grad}
\end{figure}

\paragraph{Accuracy of gradient-estimates.}
The true value-gradients can be computed and compared with the algorithm's estimates on the contextual bandit task. Fig.~\ref{f:robo_grad} shows the performance of the two algorithms. $\qprop$'s gradient-error converges to $<0.005$ on both tasks. $\copdac$'s gradient estimate, implicit in the advantage function, converges to 0.03 (SARCOS) and 0.07 (BARRETT). This confirms that $\qprop$ yields significantly better gradient estimates.

$\copdac$'s estimates are significantly worse for Barrett compared to SARCOS, in line with the worse performance of $\copdac$ on Barrett in Fig.~\ref{f:robo}. It is unclear why $\copdac$'s gradient estimate gets worse on Barrett for some period of time. On the other hand, since there are no guarantees on $\copdac$'s estimates, it follows that its erratic behavior is perhaps not surprising.

\paragraph{Comparison with bandit task in \citep{silver:14}.}
Note that although the contextual bandit problems investigated here are lower-dimensional (with 21-dimensional state spaces and 7-dimensional action spaces) than the bandit problem in \citep{silver:14} (with no state space and 10, 25 and 50-dimensional action spaces), they are nevertheless much harder. The optimal action in the bandit problem, in all cases, is the constant vector $[4,\ldots,4]$ consisting of only 4s. In contrast, SARCOS and BARRETT are nontrivial benchmarks even when fully supervised.

\subsection{Octopus Arm}

The octopus arm task is a challenging environment that is high-dimensional, sequential and highly nonlinear. 

\paragraph{Desciption.}
The objective is to learn to hit a target with a simulated octopus arm \citep{engel:05}.\footnote{Simulator taken from\\ \url{http://reinforcementlearningproject.googlecode.com/svn/trunk/FoundationsOfAI/octopus-arm-simulator/octopus/}}
Settings are taken from \citep{silver:14}. Importantly, the action-space is \emph{not} simplified using ``macro-actions''.
The arm has $C=6$ compartments attached to a rotating base. There are $50=8C+2$ state variables ($x$, $y$ position/velocity of nodes along the upper/lower side of the arm; angular position/velocity of the base) and $20=3C+2$ action variables controlling the clockwise and counter-clockwise rotation of the base and three muscles per compartment. 

After each step, the agent receives a reward of $10\cdot \Delta_{dist}$, where $\Delta_{dist}$ is the change in distance between the arm and the target. 
The final reward is $+50$ if the agent hits the target.
An episode ends when the target is hit or after 300 steps. 

The arm initializes at eight positions relative to the target: $\pm 45^\circ, \pm 75^\circ, \pm 105^\circ, \pm 135^\circ$. See Appendix~\ref{sec:octopus_details} for more details.

\begin{figure}[!htb]
	\centering
	\begin{center}
		\subfigure{\includegraphics[width=2.85in]{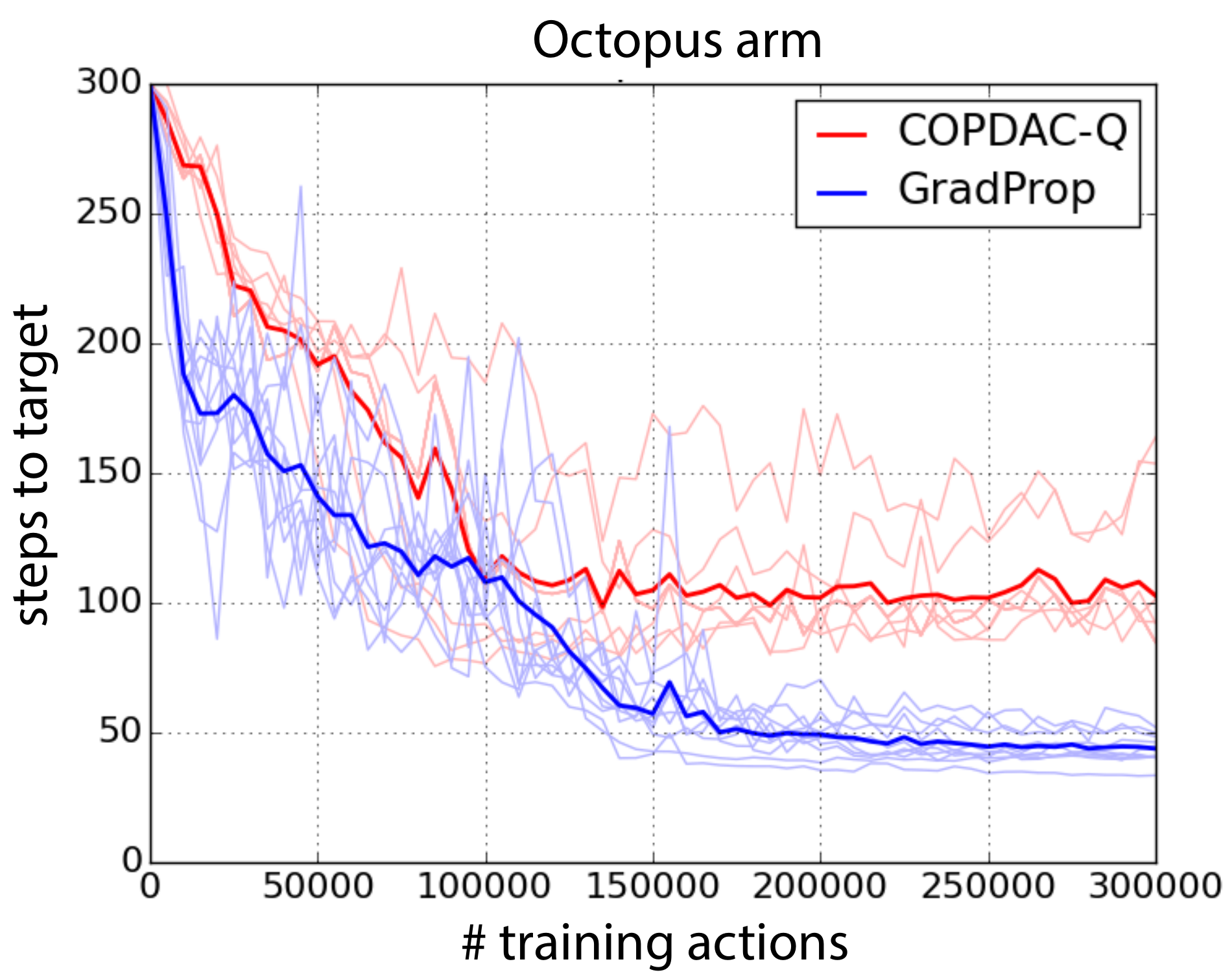}}
		\hspace{5mm}
		\subfigure{\includegraphics[width=2.85in]{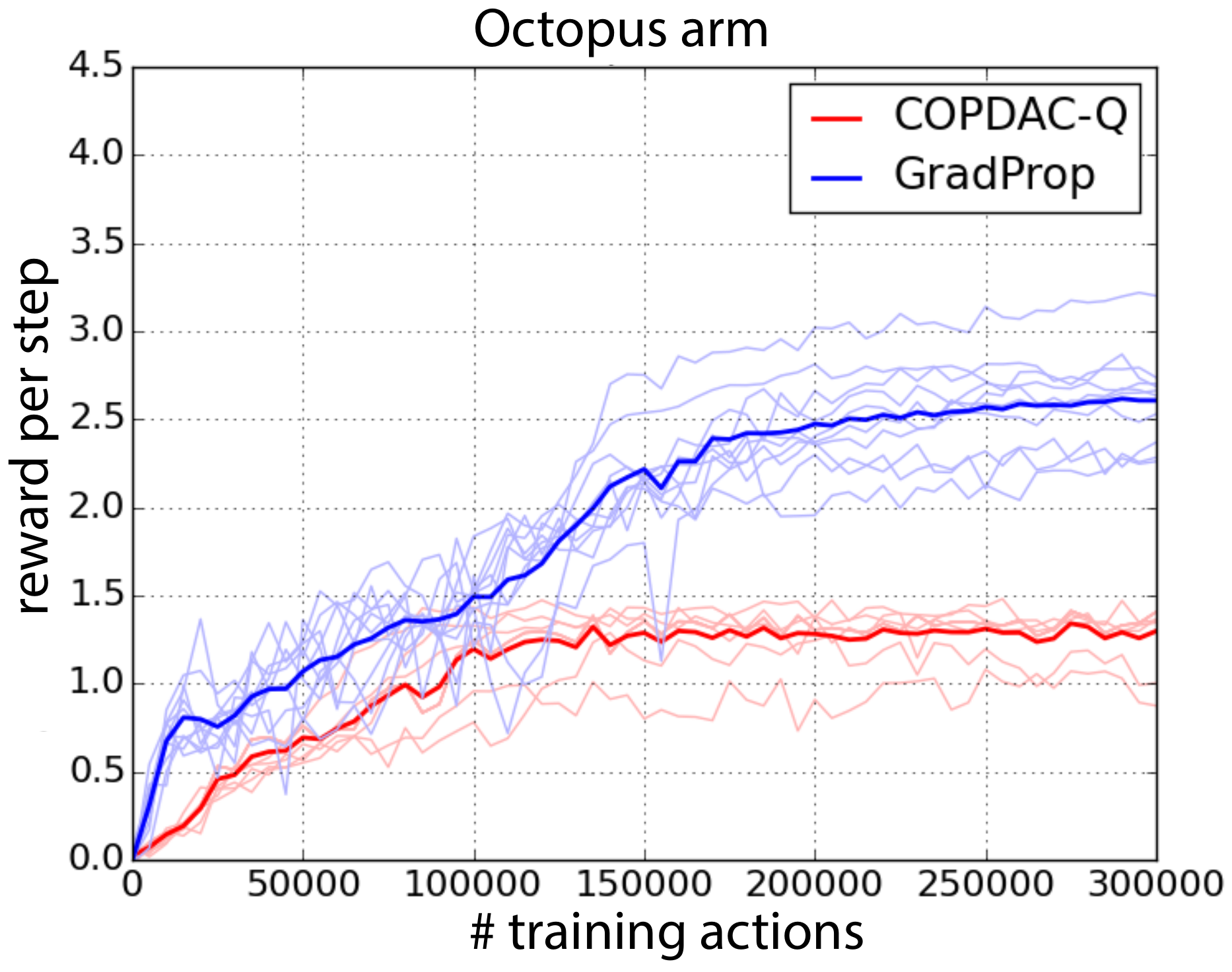}}
	\end{center}
	\vspace{-3mm}
	\caption{\textbf{Performance on octopus arm task.} Ten runs of $\qprop$ and  $\copdac$ on a 6-segment octopus arm with 20 action and 50 state dimensions. 
	Thick lines depict average values. Left panel: number of steps/episode for the arm to reach the target. Right panel: corresponding average rewards/step.}
	\label{f:oct}
\end{figure}

\paragraph{Network architectures.}
We applied $\qprop$ to an actor-network with $100$ hidden rectifiers and linear output units clipped to lie in $[0,1]$; and critic and deviator networks both with two hidden layers of $100$ and $40$ rectifiers, and linear output units.
Updates were computed via RMSProp with step rate of $10^{-4}$, moving average decay, with Nesterov momentum~\citep{hinton:12a} penalty of $0.9$ and $0.9$ respectively, and discount rate $\gamma$ of $0.95$.

The variance of the Gaussian noise was initialized to $\sigma^2=1.0$. An explore/exploit tradeoff was implemented as follows. When the arm hit the target in more than 300 steps, we set $\sigma^2\leftarrow \sigma^2\cdot 1.3$; otherwise $\sigma^2\leftarrow \sigma^2/1.3$. A hard lower bound was fixed at $\sigma^2=0.3$.

We implemented COPDAC-Q on a variety of architectures; the best results are shown (also please see Figure~3 in \citep{silver:14}). They were obtained using a similar architecture to $\qprop$, with sigmoidal hidden units and sigmoidal output units for the actor. Linear, rectilinear and clipped-linear output units were also tried.  As for $\qprop$, cloning and experience replay were used to increase stability.


\paragraph{Performance.}

Figure \ref{f:oct} shows the steps-to-target and average-reward-per-step on ten training runs.
$\qprop$ converges rapidly and reliably (within $\pm170,000$ steps) to a stable policy that uses less than 50 steps to hit the target on average (see supplementary video for examples of the final policy in action). $\qprop$ converges quicker, and to a better solution, than $\copdac$. The reader is strongly encouraged to compare our results with those reported in \citep{silver:14}. To the best of our knowledge, $\qprop$ achieves the best performance to date on the octopus arm task.

\paragraph{Stability.}
It is clear from the variability displayed in the figures that both the policy and the gradients learned by $\qprop$ are more stable than $\copdac$. Note that the higher variability exhibited by $\qprop$ in the right-hand panel of Fig.~\ref{f:oct} (rewards-per-step) is misleading. It arises because dividing by the number of steps -- which is lower for $\qprop$ since it hits the target more quickly after training -- inflates $\qprop$'s apparent variability.

\section{Conclusion}
\label{sec:conc}

Value-Gradient Backpropagation ($\qprop)$ is the first deep reinforcement learning algorithm with compatible function approximation for continuous policies. It builds on the deterministic actor-critic, $\copdac$, developed in \citep{silver:14} with two decisive modifications. First, we incorporate an explicit estimate of the value gradient into the algorithm. Second, we construct a model that decouples the internal structure of the actor, critic, and deviator -- so that all three can be trained via backpropagation. 

$\qprop$ achieves state-of-the-art performance on two contextual bandit problems where it simultaneously solves seven regression problems without observing labels. Note that $\qprop$ is competitive with recent \emph{fully supervised} methods that solve a \emph{single} regression problem at a time. Further, $\qprop$ outperforms the prior state-of-the-art on the octopus arm task, quickly converging onto policies that rapidly and fluidly hit the target.

\paragraph{Acknowledgements.}
We thank Nicolas Heess for sharing the settings of the octopus arm experiments in \citep{silver:14}.

\setcounter{equation}{0}\setcounter{section}{0}
\renewcommand{\thesection}{\Alph{section}}
\renewcommand{\theequation}{\Alph{section}.\arabic{equation}}
\setcounter{thm}{0}\setcounter{defn}{0}
\renewcommand{\thethm}{\Alph{section}.\arabic{thm}}
\renewcommand{\thedefn}{\Alph{section}.\arabic{defn}}
\renewcommand{\therem}{\Alph{section}.\arabic{rem}}

\vspace{4mm}
\noindent
{\Large \textbf{Appendices}}

\section{Explicit weight updates under $\qprop$}
\label{sec:explicit}

It is instructive to describe the weight updates under $\qprop$ more explicitly. 

Let $\theta^j$, $\wt^j$ and $\vt^j$ denote the weight vector of unit $j$, according to whether it belongs to the actor, deviator or critic network. Similarly, in each case $\bpi^j$ or $\pi^j$ denotes the influence of unit $j$ on the network's output layer, where the influence is vector-valued for actor and deviator networks and scalar-valued for the critic network. 

Weight updates in the deviator-actor-critic model, where all three networks consist of rectifier units performing stochastic gradient descent, are then per Algorithm~\ref{alg:pb}. Units that are not active on a round do not update their weights that round.
\vspace{2mm}

\begin{algorithm}[H]
    \caption{\texttt{$\qprop$: Explicit updates}.\label{alg:pb}}
    \DontPrintSemicolon
     \SetKwInOut{Input}{input}
     \For{rounds $t =1, 2, \ldots, T$}{
        Network gets state $\sta_t$, responds $\act_t=\bmu_\Theta(\sta_t)+\bepsilon$, gets reward $r_t$\;
        $\tdg_t \longleftarrow r_t  + \gamma Q^{\Vt_t}(\tilde{\sta}_{t+1})
        - Q^{\Vt_t}(\tilde{\sta}_t) - \langle \Qt^{\Wt_t}(\tilde{\sta}_t),\bepsilon\rangle\qquad\quad\texttt{// compute TDG-error}$\;
        \For{unit $j =1, 2, \ldots, n$}{
            \uIf{$j$ is an active actor unit}{
                $\theta^{j}_{t+1} \longleftarrow \theta^{j}_{t} + \eta^A_t \cdot\Big\langle \Qt^{\Wt_t}\big(\tilde{\sta}_t\big),\bpi^j_t \Big\rangle\cdot \bphi^j_t(\sta_t)$
                \quad\qquad\texttt{// backpropagate $\mathtt \Qt^\Wt$}\;
            }
            \uElseIf{$j$ is an active critic unit}{
                $\vt^{j}_{t+1} \longleftarrow \vt^{j}_t+ \eta^C_t \cdot\Big\langle\tdg_t, \pi^j_t\Big\rangle\cdot\bphi^j_t(\sta_t)$
                \qquad\quad\qquad\qquad\texttt{// backpropagate $\tdg$}\;
            }
            \uElseIf{$j$ is an active deviator unit}{
                $\wt^{j}_{t+1} \longleftarrow \wt^{j}_t+ \eta^C_t \cdot\Big\langle\tdg_t\cdot \bepsilon,\bpi^j_t\Big\rangle\cdot\bphi^j_t(\sta_t)$
                \quad\quad\qquad\texttt{// backpropagate $\tdg\cdot\bepsilon$}\;
            }
        }
    }
\end{algorithm}
\vspace{2mm}

\section{Details for octopus arm experiments}
\label{sec:octopus_details}
Listing 1 summarizes technical information with respect to the physical description and task setting used in the octopus arm simulator in XML format.

\begin{sflisting}[caption=Physical description and task setting for the octopus arm (setting.xml).]
<constants>
    <frictionTangential>0.4</frictionTangential>
    <frictionPerpendicular>1</frictionPerpendicular>
    <pressure>10</pressure>
    <gravity>0.01</gravity>
    <surfaceLevel>5</surfaceLevel>
    <buoyancy>0.08</buoyancy>
    <muscleActive>0.1</muscleActive>
    <musclePassive>0.04</musclePassive>
    <muscleNormalizedMinLength>0.1</muscleNormalizedMinLength>
    <muscleDamping>-1</muscleDamping>
    <repulsionConstant>.01</repulsionConstant> 
    <repulsionPower>1</repulsionPower>
    <repulsionThreshold>0.7</repulsionThreshold>
    <torqueCoefficient>0.025</torqueCoefficient>
</constants>
    
<targetTask timeLimit="300" stepReward="1">
  <target position="-3.25 -3.25" reward="50" />
</targetTask>
\end{sflisting}

\end{document}